\documentclass[runningheads]{llncs}
\usepackage[T1]{fontenc}
\usepackage{graphicx}
\usepackage{amsmath}
\usepackage{amsfonts}
\usepackage{enumitem}
\usepackage[mode=multiuser, layout=inline, status=draft]{fixme}
\usepackage[linesnumbered,ruled,vlined]{algorithm2e}
\usepackage{float}

\makeatletter
\setlength{\@fptop}{0pt}
\makeatother

\fxsetup{theme=color}
\FXRegisterAuthor{ml}{aml}{\color{blue}Björn}
\FXRegisterAuthor{oo}{aoo}{\color{blue}Özgür}
\FXRegisterAuthor{r}{ar}{\color{red}Raus oder kürzen: }

\DeclareMathOperator*{\argmax}{arg\,max} 
%
%
\begin{document}
%
\title{Mechanisms for Data Sharing in Collaborative Causal Inference (Extended Version)}
\titlerunning{Mechanisms for Data Sharing in Collaborative Causal Inference}
%
\author{Björn Filter\inst{1}\orcidID{0009-0008-8666-6239} \and
Ralf Möller \inst{2}\orcidID{0000-0002-1174-3323} \and
Özgür Lütfü Özçep \inst{2}\orcidID{0000-0001-7140-2574} 
}
\authorrunning{B. Filter et al.}
%
\institute{Institute of Information Systems, University of Lübeck, Germany
\email{b.filter@uni-luebeck.de}\and
Institute for Humanities-Centered AI (CHAI), University of Hamburg, Germany
\email{\{ralf.moeller,oezguer.oezcep\}@uni-hamburg.de}
}
\maketitle              
\begin{abstract}
Collaborative causal inference (CCI) is a federated learning method for pooling data from multiple, often self-interested, parties, to achieve a common learning goal over causal structures, e.g. estimation and optimization of treatment variables in a medical setting. Since obtaining data can be costly for the participants and sharing unique data poses the risk of losing competitive advantages, motivating the participation of all parties through equitable rewards and incentives is necessary. This paper devises an evaluation scheme to measure the value of each party's data contribution to the common learning task, tailored to causal inference's statistical demands, by comparing completed partially directed acyclic graphs (CPDAGs) inferred from observational data contributed by the participants. The Data Valuation Scheme thus obtained can then be used to introduce mechanisms that incentivize the agents to contribute data. It can be leveraged to reward agents fairly, according to the quality of their data, or to maximize all agents' data contributions. 

\keywords{Collaborative Causal Inference  \and Data Evaluation \and Fairness \and Mechanism Design \and Data Sharing}
\end{abstract}

\section{Introduction}
Causal inference estimates the causal effect of treatment variables within a particular population, a method widely adopted across various fields. In healthcare, for example, it assesses the effectiveness of pharmaceutical interventions \cite{shi2022learning} and explores gene effects on phenotypes \cite{hu2018application}. It also finds application in policy decision-making \cite{yao2021survey}, recommender systems \cite{liang2016causal}, education \cite{cordero2018causal}, advertisement \cite{sun2015causal}, agriculture \cite{singer2014human}, and numerous other domains.

Various methodologies for causal inference have emerged to analyze interventional data, obtained from controlled experimental trials, or observational data. However, their effectiveness can be hindered by data quality issues, including data sparsity and non-representativeness. For example, patient preferences for hospitals can limit each institution's records, causing biases and data sparsity. Thus, causal inference efforts by single hospitals risk inaccurate treatment effect estimates, potentially impeding optimal medication prescriptions \cite{masic2008evidence}.

Collaborative causal inference (CCI) leverages collective data pooling from diverse agents, such as companies, organizations, or individuals, to tackle data sparsity and non-representativeness challenges. This approach enhances the precision of causal model estimations. Methods such as simple aggregation or multi-source causal inference \cite{bareinboim2016causal} yield accurate and statistically significant estimates by utilizing data from all participants. However, success depends on the willingness of agents to share data, which is not always guaranteed. Entities often prioritize self-interest, hesitating to share data due to associated collection costs.

As a concrete example consider the case of genetic research for predicting and treating illnesses which relies on substantial amounts of data. 
Despite decreasing costs, acquiring such data remains expensive \cite{pareek2011sequencing}. Pooling genome and gene expression data from multiple sources is crucial for accurate models, especially when dealing with biased subpopulations. To achieve this, assessing data quality and incentivizing parties based on it are necessary.

In this paper, we propose a mechanism for CCI aimed at motivating agents to provide maximal amounts of data to gain access to a high-quality model. We introduce a quantitative measure to assess data quality and each party's contribution to the collective model. This valuation scheme is tailored to the unique requirements of statistical causal inference, where, in contrast to the framework proposed by Karimireddy and colleagues \cite{karimireddy2022mechanisms}, no ground truth is available. Unlike the data valuation scheme introduced by Qiao and colleagues \cite{pmlr-v202-qiao23a}, our method considers differences in estimated causal graph models and probability distributions for causal effects among contributors' data. The resulting data valuation function incentivizes agents to maximize their data contribution. Furthermore, it can also be used for the allocation of fair incentives based on data quality, ensuring better causal estimators for agents with higher-quality data.

An implementation of our data valuation scheme can be found at\\ https://github.com/bjofilter/Data-Valuation-CCI. 
This paper is an extended version of a paper to be published in the proceedings of the 47th German AI conference (KI 2024).    

\section{Preliminaries}\label{prelim}
We collect the basic notions and methods for causal structure learning. 
\textit{Directed acyclic graphs (DAGs)} can model 
conditional independencies. For a DAG $\mathcal{D} = (\mathbf{V, E})$ let the vertices $\mathbf{V} = {V_1, ..., V_p}$ correspond to a $p$-variate random vector $\mathbf{X} \in \mathbb{R}^p$, while the edges $\mathbf{E}$ denote relationships between the variables they connect. A probability function $P$ over $\mathbf{X}$ is called \textit{Markov} relative to a DAG $\mathcal{D}$ if it factorizes as 
$P(x_1, ..., x_p) = \prod_{i = 1}^{p} P(x_i \mid Pa_{X_i}(\mathcal{D}))$, where $Pa_{X_i}(\mathcal{D})$ are the parents of  node $X_i$. 

In causal inference, a DAG $\mathcal{D}$ is not only used to represent associational but also causal relationships. Here, intuitively, the parents $Pa_{V_i}(\mathcal{D})$ of a node $V_i \in \mathbf{V}$ can be interpreted as the \textit{direct causes} of $V_i$, and the children as the \textit{direct effects} of $V_i$. Such a model gives information about how a variable would change, if its direct causes are changed, for example by external manipulation. To denote such a manipulation, Pearl's \textit{do-operator} can be used: Let $\mathcal{D} = (\mathbf{V}, \mathbf{E})$ be a DAG that is Markov to a probability distribution $P$ over a set of random variables $\mathbf{V} = \lbrace V_1, ..., V_p \rbrace$. Let $V_i, V_j \in \mathbf{V}$ be two distinct variables and $v_i,  v_i$ the values of those variables. The \textit{causal effect} of $V_i$ on $V_j$ is denoted as $P(v_j \mid  do(V_i = v_i))$, where $do(V_i = v_i)$ or short $do(v_i)$ is an intervention that sets $V_i = v_i$ \cite{Pearl09}.

Let $\mathbf{v} = (v_1, ..., v_p)$ be a sequence of values of the variables $\mathbf{V}$. $\mathbf{v}$ is consistent with $v'_i$ if $v_i = v_i'$. Given a DAG $\mathcal{D}$, the distribution after an intervention, called \textit{intervention distribution}, can be expressed as follows: $P_{\mathcal{D}}(\mathbf{v} \mid do(V_i = v'_i)) =$ $\prod_{j: V_j \neq V_i} P(v_j \mid Pa_{V_j}(\mathcal{D}))$ if $\mathbf{v}$ is consistent with $v'_i$, else set $=0$.

The \textit{total causal effect} of $V_i$ on $V_j$ is written as $P(V_j \mid do(V_i))$ and is defined as the probability distribution of $V_j$ after the intervention at $V_i$.

A \textit{linear structural equation model (SEM)} $\mathcal{M}$ comprises endogenous variables $\mathbf{V} = \lbrace V_1, ..., V_p \rbrace$,  exogenous variables $\mathbf{U} = \lbrace U_1, ..., U_p \rbrace$, and a joint distribution $P_M$ over $\mathbf{U}$, with structural assignments $V_j = \sum_{i < j} c_{ij}V_i + U_j$ (for  $j = 1,...,p$) defining direct causal effects between variables. When dealing with observational data that is assumed to have been created by an underlying SEM, various algorithms exist to estimate the underlying DAG, like for example the PC algorithm \cite{kalisch2007estimating} or GES (greedy equivalence search) \cite{chickering2002optimal}. However, most algorithms for estimating a DAG from observational data cannot distinguish between two DAGs that are Markov equivalent (two DAGs $\mathcal{G}_1$ and $\mathcal{G}_2$ are said to be Markov equivalent if they encode the same set of conditional independencies \cite{chickering2013}). Instead, one can only identify the Markov equivalence class (MEC) of DAGs. This is represented by a \emph{Completed Partially Directed Acyclic Graph (CPDAG)}, which has a directed edge between two nodes if such an edge is present in all the DAGs belonging to the MEC and an undirected edge between two nodes if edges in both directions are present among DAGs in the MEC. The set of \textit{consistent DAG extensions} of $\mathcal{C}$, $CE(\mathcal{C})$, consists of all DAGs represented by $\mathcal{C}$. We define the \textit{intervention distribution of causal effects} given a CPDAG $\mathcal{C}$ as follows:
\begin{align}
    P_{\mathcal{C}}(\mathbf{v} \mid do(V_i = v'_i)) = \sum_{\mathcal{D}\in CE(\mathcal{C})}\frac{1}{|CE(\mathcal{C})|} P_{\mathcal{D}}(\mathbf{v} \mid do(V_i = v'_i))
\end{align}

When a CPDAG $\mathcal{C}$ has been estimated, the IDA algorithm (Intervention-calculus when the DAG is Absent) \cite{maathuis2009estimating} can be used to estimate the causal effect of a variable $V_i \in \mathbf{V}$ on another variable $V_j \in \mathbf{V}$. This algorithm estimates the multiset $\Theta^{ij}$ of all possible causal effects of $V_i$ on $V_j$ given the MEC described by $\mathcal{C}$, i.e. the effects described by all possible DAGs $\mathcal{D} \in CE(\mathcal{C})$. The basic idea behind the IDA algorithm is to determine all DAGs $\mathcal{D} \in CE(\mathcal{C})$ for a given CPDAG $\mathcal{C}$. In each of these DAGs, the parents of $V_i$ can be used as a so-called \textit{adjustment set} \cite{Pearl09} (see also appendix \ref{appdSID}) to estimate the \emph{average causal effect (ACE)} $\tau^{ij}(\mathcal{D})$, which is defined as 
 $   \tau^{ij}(\mathcal{D}) = E(V_j \mid do(V_i = v_i + 1)) - E(V_j \mid do(V_i = v_i))$,
where $ E(V_j \mid do(V_i = v_i))$ stands for the expected value of $V_j$ after the intervention $do(v_i)$. The ACE can be estimated by performing the regression of $V_j$ on $V_i$ and $Pa_{V_i}(\mathcal{D})$, denoted $\beta_{V_i|Pa_{V_i}(\mathcal{D})}$ as follows:
\begin{align}
    \beta_{V_i|Pa_{V_i}(\mathcal{D})} = \begin{cases} 0 &\text{if $V_j \in Pa_{V_i}(\mathcal{D})$}, \\
\text{coefficient of $V_i$ in $V_j \sim V_i + Pa_{V_i}(\mathcal{D})$} &\text{if $V_j \notin Pa_{V_i}(\mathcal{D})$},
\end{cases}
\end{align}
where $V_j \sim V_i + Pa_{V_i}(\mathcal{D})$ stands for the linear regression of $V_j$ on $V_i$ and $Pa_{V_i}(\mathcal{D})$ \cite{maathuis2009estimating}. Thus $\beta_{V_i|Pa_{V_i}(\mathcal{D})}$ describes a Gaussian distribution $\hat{\theta}^{ij}_{Pa_{V_i}(\mathcal{D})}$ consisting of the estimated effect $\hat{\tau}^{ij}_{Pa_{V_i}(\mathcal{D})}$ and the standard error $\hat{\sigma}^{ij}_{Pa_{V_i}(\mathcal{D})}$. Since only the parents of $V_i$ are used to estimate the effect, DAGs in which $V_i$ has the same parents yield the same effect. However, in DAGs with different parents, the estimated causal effect can be different. Thus, a regression has to be performed for each of the possible parent sets of $V_i$ given $\mathcal{C}$. Let $\mathbf{Pa}_{V_i}(\mathcal{C}) = \{Pa_{V_i}(\mathcal{D})|\mathcal{D} \in CE(\mathcal{C})\}$. The estimated causal effect of $V_i$ on $V_j$ given CPDAG $\mathcal{C}$ is given by the weighted sum over all possible sets of parents for $V_i$:
\begin{align}
    \hat{\theta}^{ij} = \sum_{\mathbf{Z} \in \mathbf{Pa}_{V_i}(\mathcal{C})} \frac{|\{\mathcal{D} \in CE(\mathcal{C})|Pa_{V_i}(\mathcal{D}) = \mathbf{Z}\}|}{|CE(\mathcal{C})|} \hat{\theta}^{ij}_{\mathbf{Z}},\label{mixmodel}
\end{align}
that is, the mixture model obtained by summing over all effects induced by a parent-set in $\mathbf{Pa}_{V_i}(\mathcal{C})$, divided by the fraction of DAGs in $CE(\mathcal{C})$ in which $V_i$ has that set of parents.

\section{Collaborative Causal Inference}\label{CCI}
We assume, that $n$ self-interested agents $N:=\{1, ..., n\}$ want to solve a common causal inference problem by estimating a CPDAG and using observational data of $p$ variables, $\mathbf{V} = \{V_1, ..., V_p\}$ to estimate causal effects between variable-pairs $(V_i, V_j) \in \mathbf{V} \times \mathbf{V}$. Furthermore, agents are non-malicious and they may acquire data from different and potentially biased populations, but these data collectively form the common target population of interest. Each agent $i \in N$ can produce data points $D_i$ for a fixed cost $c_i > 0$ per data point. Both single agents as well as coalitions of agents use their available data to infer a CPDAG $\mathcal{C} = \left(\mathbf{V}, \mathbf{E}_{\mathcal{C}} \right)$  and then infer the causal effects using the IDA algorithm. We will make the rather strict assumption, that all agents are equally interested in all effects between all possible variable pairs $(V_i, V_j) \in \mathbf{V} \times \mathbf{V}$.

Let $N$ denote the grand coalition containing all agents and let $C \subseteq N$ denote some coalition of agents. 
 $D_C$ denotes the data provided by coalition $C$. 
Let $\mathcal{E}(D_C) = \mathcal{E}_C = \left(\mathcal{C}_C, \hat{\Theta}_C\right)$ be the estimator computed from $D_C$ as described in Section \ref{prelim}. It consists of an estimated CPDAG $\mathcal{C}_C$ and the estimated distributions of causal effects $\hat{\Theta}_C = \{\hat{\theta}^{ij}_C \mid (V_i, V_j) \in \mathbf{V} \times \mathbf{V}\}$. To simplify notation, the subscript $C$ is replaced by index $i$ when $C = \{i\}$. Thus, $D_C = \cup_{i \in C}D_i$. 

In the following, it will be necessary for agents and coalitions, to know the quality of their estimators, to calculate their improvement rate, and to allocate fair incentives to agents. Therefore let $v:\mathcal{E}_{\mathcal{C}} \times \mathcal{E}_B \rightarrow \mathbb{R}$ be the valuation function for the estimator $\mathcal{E}_{\mathcal{C}}$, which will be defined in the next chapter. It evaluates an estimator against the benchmark estimator $\mathcal{E}_B$ (usually the estimator obtained from the data available to the grand coalition). We can use this to define a reward value for each agent $i \in N$, denoted as $r_i$. The goal is to reward each agent's data contribution with an estimator $\mathcal{E}_i$, such that the quality of $\mathcal{E}_i$ corresponds to the valuation of $i$'s data contribution $D_i$, which will be measured using the function $v$.

\section{Data Valuation Scheme}
We assume coalitions are interested in both the inferred CPDAG $\mathcal{C}$, as well as the estimated distributions of causal effects $\hat{\theta}^{ij}$ for all pairs of variables $(V_i, V_j) \in \mathbf{V} \times \mathbf{V}$. In practice, the ground truth CPDAG and causal effects are unknown quantities and must be inferred from the observed data. The best available estimate for these are the CPDAG and estimated distributions computed using all data available to the grand coalition $N$, due to the asymptotic consistency of the estimators \cite{kalisch2007estimating}. Thus, we will usually treat the grand coalition estimates of the CPDAG $\mathcal{C}_N$ as the surrogate for the CPDAG of the true underlying DAG, similarly, for all $V_i, V_j \in \mathbf{V} \times \mathbf{V}$, we use $\hat{\theta}^{ij}_N$ (the estimated distributions using the grand coalitions' data) as the surrogate for the ground truth population ACE $\theta^{ij}$.

Different CPDAGs often yield vastly different effects. Suppose we have two CPDAGs, $\mathcal{C}_1$ and $\mathcal{C}_2$, over the same set of variables $\mathbf{V}$, and we are interested in the possible causal effects of $V_i \in \mathbf{V}$ on $V_j \in \mathbf{V} \setminus \{V_i\}$. To infer these, one has to determine the sets $\mathbf{Pa}_{V_i}(\mathcal{C}_1)$ and $\mathbf{Pa}_{V_i}(\mathcal{C}_2)$ and has to use these to calculate the possible causal effects (see Section \ref{prelim}).

Therefore, to evaluate the data of a coalition $C \subseteq N$, we propose to first take into account the difference between the CPDAG estimated by that coalition, $\mathcal{C}_C = (V, E_C)$ and the grand coalition estimates of the CPDAG $\mathcal{C}_N = (V, E_N)$. To do this, we introduce a measure extending the measure of \emph{structural intervention distance (SID)}  \cite{peters2014structural}. We will call this the \emph{distribution-SID (dSID)}. It is the number of node-pairs $V_i, V_j$, for which the distribution of causal effects of $V_i$ on $V_j$ given $\mathcal{C}_1$ is different from the distribution given $\mathcal{C}_2$. For a discussion of the SID and other possible distance measures from the literature see Section \ref{rw} on related work. We show how the dSID can be computed in appendix \ref{appdSID}.

\begin{definition}[Distribution Structural Intervention Distance (dSID)]
    Let $\mathbb{G}$ be the space of CPDAGs over $p$ variables. We then define
    \begin{alignat*}{3}
        & dSID:\text{ }  && \mathbb{G} \times \mathbb{G}    && \rightarrow \mathbb{N}\\
        &               && (\mathcal{C}_1, \mathcal{C}_2)    && \mapsto |\{(V_i, V_j) \in \mathbf{V} \times \mathbf{V}, V_i \neq V_j \mid \text{the distribution of causal}\\
        &               &&                                   && \text{effects of $V_i$ on $V_j$ is falsely identified in $\mathcal{C}_1$ with respect to $\mathcal{C}_2$}\}|
    \end{alignat*}
\end{definition}
 This leads to the first part of our data valuation measure. Let $\mathcal{E}$ be an estimator, consisting of a CPDAG $\mathcal{C}_\mathcal{E} = (\mathbf{V}, \mathbf{E})$ and the estimated distributions $\hat{\theta}^{ij}_{\mathcal{E}}$ for all pairs of variables $V_i, V_j \in \mathbf{V} \times \mathbf{V}$. Further, let $\mathcal{B}$ be a benchmark estimator (usually the estimator obtained from dataset $D_N$ of the grand coalition), also consisting of a CPDAG $\mathcal{C}_\mathcal{B} = (\mathbf{V}, \mathbf{E}')$ and distributions $\hat{\theta}^{ij}_{\mathcal{B}}$. Let
\begin{align}
    v_{dSID}(\mathcal{E}, \mathcal{B}) := - \frac{dSID(\mathcal{C}_\mathcal{E}, \mathcal{C}_\mathcal{B})}{|\mathbf{V}| \cdot |\mathbf{V}-1|}.
\end{align} 
The $v_{dSID}$ metric can therefore be seen as the percentage amount of variable pairs, for which the intervention distribution of causal effects differ between $\mathcal{C}_\mathcal{E}$ and $\mathcal{C}_\mathcal{B}$. Its maximum value is 0, if all intervention distributions in the two CPDAGs are identical, and its minimum is $-1$ if all are different.

We will refine the $v_{dSID}$, because even for pairs of variables for which the distribution of causal effects is correctly identified in $\mathcal{C}_\mathcal{E}$ with respect to (wrt) $\mathcal{C}_\mathcal{B}$, the estimated effect distributions $\theta^{ij}_\mathcal{E}$ and $\theta^{ij}_\mathcal{B}$ can differ since they are computed by regression using different datasets. Thus, we adopt the method proposed by Qiao and colleagues \cite{pmlr-v202-qiao23a}, using the negative reverse Kullback-Leibler (KL) divergence. It quantifies, how much one probability distribution differs from another probability distribution. 

 For each pair of variables $V_i, V_j$ where $\mathcal{C}_\mathcal{E}$ and $\mathcal{C}_\mathcal{B}$ identify the same causal effects, the IDA algorithm will yield relatively similar estimated causal effects. In contrast to the approach of Qiao and colleagues \cite{pmlr-v202-qiao23a}, however, for a given variable-pair $(V_i, V_j) \in \mathbf{V} \times \mathbf{V}$, we are not dealing with one causal effect described by a Gaussian distribution, but a distribution over several effects. The mixture model given by the sum of these distributions then describes the effect distribution $\hat{\theta}^{ij}_\mathcal{E}$ (see Equation \ref{mixmodel}). This can then be compared to the mixture model of possible causal effects from the benchmark estimator $\mathcal{B}$, $\hat{\theta}^{ij}_\mathcal{B}$. Define $\mathbf{A} = \{(V_i, V_j) \in \mathbf{V} \times \mathbf{V}, V_i \neq V_j |$ \textit{the distribution of causal effects from} $V_i$ to $V_j$ \textit{is correctly identified in} $\mathcal{C}_\mathcal{E}$ \textit{wrt} $\mathcal{C}_\mathcal{B} \}$. We can now define the second part of our data valuation method:
\begin{align}
    v_{-KL}(\mathcal{E}, \mathcal{B}) = \frac{\sum_{(V_i, V_j) \in \mathbf{A}} -KL\left( \hat{\theta}^{ij}_\mathcal{E}||\hat{\theta}^{ij}_\mathcal{B} \right)}{\big|\mathbf{A}\big|}.
\end{align}

Lastly, we propose the combination of $v_{dSHD}$ and $v_{-KL}$ as the final data valuation function:
\begin{align}
    v(\mathcal{E}, \mathcal{B}) = v_{dSHD}(\mathcal{E}, \mathcal{B}) + v_{-KL}(\mathcal{E}, \mathcal{B}).
\end{align}
When $\mathcal{C}_\mathcal{E}$ and $\mathcal{C}_\mathcal{B}$ identify the same distribution of causal effects for a variable pair $(V_i, V_j) \in \mathbf{V} \times \mathbf{V}$, the underlying data sets must be similar concerning the effects between these variables. Thus the negative KL-divergence between $\hat{\theta}^{ij}_\mathcal{E}$ and $\hat{\theta}^{ij}_\mathcal{B}$ is relatively small. Consequently, $v_{-KL}$-values are typically much smaller than $v_{dSHD}$-values. Because differences in CPDAGs can lead to vastly different causal effects, this is more significant than an effect, that is identically identified by both CPDAGs, but then slightly differently estimated from the regressions given coalitions different data sets. Thus, $v_{dSHD}$ makes up a significantly larger part of our date valuation measure than the relatively small $v_{-KL}$. Intuitively, the $v$ measure can be seen as a measure of the difference of the estimated CPDAGs, degraded by the difference of the estimated effects.

\section{Modeling an Individual Agent}
To understand how agents will behave in a multi-agent setting, we 
 first consider the case of a single agent $i \in N$. 
In this case, the agent only has access to the data he produces himself and the estimator arising from this data. We will assume that each agent $i$ has a marginal fixed cost $c_i > 0$ for producing a data point. Thus his cost for producing data $D_i$ is $cost_i(D_i) = c_i|D_i|$.

Using data $D_i$, agent $i$ will be able to produce some estimator $\mathcal{E}_i$. However, there is no benchmark to assess the quality of $\mathcal{E}_i$ without access to more data. Thus, agent $i$ can only evaluate the quality of his estimator based on estimators he has produced before, using a subset of his current data. Let $T=\{1, 2, ...\}$ be a series of timestep. For each $t \in T$, let $D_i^t$ be the data produced so far by $i$ at timestep $t$ with $D_i^{t-1} \subseteq D_i^t$ for all $t > 1$. Let $\mathcal{E}\left(D_i^t\right) = \mathcal{E}_i^t$ be the estimator obtained by agent $i$ at time $t$ (using data $D_i^t$).

For the rest of this paper, we will assume that an agent will compare his current estimator to the estimators he produced before, using the current estimator as the benchmark (the current estimator presents his best knowledge of the real distribution underlying the data). The agent will act according to how much his estimator improved from older versions. If the improvement rate is judged as too small by the agent, he will decide that further improvements are not worth the cost and thus stop producing further data. We will assume, that for as long as an agent produces data, he will produce the same amount of data at every timestep. Let $\Delta_i$ denote this amount of data, thus for all $t > 1$, $|D_i^t \setminus D_i^{t-1}| = \Delta_i$. So while producing data, at every timestep $i$ occurs additional cost $c_i\Delta_i$. 

$v(\mathcal{E}_i^{t'}, \mathcal{E}_i^t)$ assesses the discrepancy between estimators $\mathcal{E}_i^{t'}$ and $\mathcal{E}_i^{t}$, with 0 indicating identical CPDAGs and causal effect distributions, and -1 indicating completely different CPDAGs. Conversely, $-v(\mathcal{E}_i^{t'}, \mathcal{E}_i^t)$ measures the improvement in estimator quality from $\mathcal{E}_i^{t'}$ to $\mathcal{E}_i^t$, with $0$ denoting no improvement and $1$ completely different estimators.

 While the PC algorithm's asymptotic consistency guarantees eventual convergence to the true underlying distribution with sufficient data  \cite{maathuis2009estimating}, noise in early-stage data may not consistently improve estimator quality over time, Thus, $v(\mathcal{E}_i^{t'}, \mathcal{E}_i^t) > v(\mathcal{E}_i^{t'-1}, \mathcal{E}_i^t)$ is not guaranteed for every $t, t' \in T$ with $t>t'>1$. For this reason, it makes sense for an agent to not only take into account the improvement from the last estimator to the current one, $v(\mathcal{E}_i^{t-1}, \mathcal{E}_i^t)$ but also the average improvement rate from older estimators towards the current one: $\frac{-v(\mathcal{E}_i^{t'}, \mathcal{E}_i^t)}{t-t'}$. With this, we can define the \textit{improvement rate} as follows:
\begin{equation}\label{eq:improvement}
    im_i(t) := \frac{1}{t-1} \sum_{t' \in \{1, ..., t-1\}}\frac{-v(\mathcal{E}_i^{t'}, \mathcal{E}_i^t) + v\left(\mathcal{E}_i^{t}, \mathcal{E}_i^{t}\right)}{t-t'},
\end{equation}
This can be seen as the average rate of improvement in model quality towards $\mathcal{E}_i^t$ per timestep, with $im_i(t) \in [0, 1]$ for all $t$. We assume that $i$ will gain some utility $u_i$ from the estimator improvement. We will assume that the utility increases linearly with increasing estimator quality. From this utility, we subtract the cost of producing $\Delta_i$ data points and thus get agent $i$'s utility at time $t$ by:
\begin{align}
    u_i(t) := im_i(t) - c_i\Delta_i
\end{align}
Finally, we will assume that agent $i$ will continue to produce data as long as his utility is not shrinking and stop producing once his utility is decreasing. That is, at the first $t \in T$, for which $im_i(t) - c_i\Delta_i < 0$, $i$ with stop producing data. Then the behavior of a single agent will be as follows:
\begin{definition}[Optimal Data Production]\label{optdata}
    Let $T=\{1, 2, ...\}$ be an infinite series of timestamps and $i \in N$ an agent. Let $t_i^{s-opt} \in T$ be the unique timestamp, such that for all $t < t_i^{s-opt}$ $im_i(t) - c_i\Delta_i \geq 0$ and $im_i(t_i^{s-opt}) - c_i\Delta_i < 0$. Then the \textup{optimal data production} for $i$ is $D_i^{t_i^{s-opt}}$. For producing this data, $i$ will incur cost $c_i|D_i^{t_i^{s-opt}}| = c_i\Delta_it_{opt}$.
\end{definition}
Since $i$ stops producing further data after $t_i^{s-opt}$, we have $D_i^{t'_i} = D_i^{t_i^{s-opt}}$ for all $t'_i > t_i^{s-opt}$.

\section{Modeling Multiple Agents}
We will now study how agents behave in a collaborative setting as described in Section \ref{CCI}, where multiple agents produce data and the resulting model can be shared. In this framework, a server mediates the interaction between agents. First, the server publishes the mechanism to the agents, then, at each timestep $t$, each agent $i$ contributes some data $D_i^t$ to the server and finally, the server in turn rewards the agents with an estimator $\mathcal{E}_i^t$ of a certain quality. The server also provides agents with the opportunity to check the quality of any estimator they possess, that is, at time $t$ the server computes $v(\mathcal{E}_i^{t'}, \mathcal{E}_N^t)$ for any estimator $\mathcal{E}_i^{t'}$ which $i$ received at some time $t' \leq t$. This way, $i$ can always check the improvement rate of models he received so far, compared to the best available estimator at the time.

For time steps $T=\{1, 2, ...\}$ let $\mathbf{D}^t = \{D_1^t, ..., D_n^t\}$ be the data provided by agents $1$ to $n$ at time t. A \emph{mechanism} 
is formalized as a function 
$ \mathcal{M}(\mathbf{D}):D_1 \times ... \times D_n \rightarrow \mathcal{E}_1 \times ... \times \mathcal{E}_n$, 
which maps the agents' contributions to estimators they receive from the mechanism. At $t$, so far each agent $i$ generated and transmitted data $D_i^t$ to the server. For this, he is rewarded with an estimator $\mathcal{E}_i^t$ of quality $v(\mathcal{E}_i^t, \mathcal{E}_N^t)$. From now on, $\mathcal{E}_i^t$ will denote the estimator $i$ receives at $t$ from the server, while $\tilde{\mathcal{E}}_i^t$ will denote the estimator obtained only from $i$'s data $D_i^t$. The first requirement for such a mechanism is, for it to be feasible: A mechanism which returns an estimator $\mathcal{E}_i^t$ to agent $i$ is said to be \textup{feasible} if for any $i \in N$ and any $D_i^t$, it satisfies $v(\mathcal{E}_i^t, \mathcal{E}_N^t) \leq v(\mathcal{E}_N^t, \mathcal{E}_N^t)$.
Feasibility is necessary to make sure that the mechanism does not return an estimator that is better than the estimator obtained by pooling all available data, $\mathcal{E}_i^t$. But since $\mathcal{E}_i^t$ is always our benchmark to which every other estimator is compared, by definition for any estimator $v(\mathcal{E}', \mathcal{E}_N^t) \leq 1$, while $v(\mathcal{E}_N^t, \mathcal{E}_N^t) = 1$, thus feasibility always holds.

The second requirement for our mechanism is individual rationality (IR), which is defined as follows: Given data contributions $\mathbf{D}^t$ by the agents, the mechanism provides an estimator $\mathcal{E}_i^t$ to agent $i$. Such a mechanism $\mathcal{M}$ is said to satisfy IR if for any agent $i \in N$ and any contribution $\mathbf{D}^t$, it holds that $v(\mathcal{E}_i^t, \mathcal{E}_N^t) \geq v(\tilde{\mathcal{E}}_i^t, \mathcal{E}_N^t)$. 
Thus individual rationality guarantees that no agent will obtain a worse estimator than they would have obtained if they were working independently. IR is necessary to guarantee that all rational agents will participate in the mechanism.

\subsection{Standard Collaborative Setting}
We first examine the most basic setting for collaborative causal inference. Here, each agent immediately has access to the model of the grand coalition, obtained from all data available to the mechanism at time $t$, $\mathcal{E}_N^t$. Thus at all $t \in T$ and for all $i \in N$: $[\mathcal{M}(\mathbf{D}^t)]_i = \mathcal{E}^t_N$. 
This mechanism is feasible and also satisfies individual rationality  since $v(\mathcal{E}_N^t, \mathcal{E}_N^t) \geq v(\tilde{\mathcal{E}}_i^t, \mathcal{E}_N^t)$. We will assume that agents will behave as they would if they were to act on their own. That is, at each time step an agent $i \in N$ will produce and contribute some data to the mechanism and then evaluate, how much the estimator of the grand coalition has improved through their data contribution. If this improvement makes up for the cost $i$ incurred, he will continue to produce data, until the improvement does not outweigh their cost anymore. When first entering the mechanism, $i$ will contribute initial data $D^1_i$, since he does not know yet, how much this data will improve the grand coalition's estimator. After that, at each timestep $t > 1$, $i$ will compute the improvement rate $im_i(t)$ according to Equation \ref{eq:improvement}, but now comparing estimators of the grand coalitions at times $t' < t$ to the current estimator $\mathcal{E}_N^t$.

Again, $i$'s utility is calculated as $u_i(t):= \sum_{t' \in \{1, ..., t\}} \left(im_i(t) - c_i\Delta_i\right)$, and agent $i$ will continue to produce data as long as this is not decreasing and stop producing once it decreases. Thus the behavior of an agent $i$ in the standard collaborative setting will be as described in definition \ref{optdata}. However, we will denote the optimal time for agent $i$ to stop producing further data in the collaborative setting by $t_i^{opt}$, as opposed to $t_i^{s-opt}$ in the single agent setting.

\subsection{Data Maximizing Mechanism}
We will now present a mechanism that maximizes the data provided by all agents $i \in N$. A mechanism $\mathcal{M}$ is \textit{data-maximizing} given costs $\mathbf{c} = \{c_1, ..., c_n\}$ if it maximizes the data collected at equilibrium. Since at each timestep $i$ produces a fixed amount of data $\Delta_i$, the total amount of data provided by agent $i$ is $t^{opt}_i\Delta_i$
\begin{definition}[Data Maximization]\label{defDM}
    A mechanism $\hat{\mathcal{M}}$ is data-maximizing if 
        $\hat{\mathcal{M}} \in \argmax_{\mathcal{M}} \sum_{i \in N} \left[t^{opt}_i \Delta_i\right]$, i.e., if it maximizes the amount of data collected, subject to $\hat{\mathcal{M}}$ being feasible and satisfying IR.
\end{definition}

To maximize contributions, we need to make sure agents produce data for as long as possible. To achieve this, we will reward them at every time step with an estimator which is just good enough for them not to achieve a decrease in utility. We will do this for as long as we can provide such an estimator, given access to all the data contributed by the grand coalition. Following the design described by Karimireddy and colleagues \cite{karimireddy2022mechanisms}, we propose the following mechanism. Now
\begin{align}
    im_i(t) = \frac{1}{t - 1} \sum_{t' \in \{1, ..., t-1\}} \frac{-v\left(\mathcal{E}_i^{t'}, \mathcal{E}_N^t\right) + v\left(\mathcal{E}_i^{t}, \mathcal{E}_N^{t}\right)}{t-t'},
\end{align}
as in equation \ref{eq:improvement}, $im_i(t)$ measures the improvement towards the quality of the last model $i$ has received, $\mathcal{E}_i^t$, however, this does not have to be the best possible model $\mathcal{E}_N^t$. Now we can formulate our data-maximizing mechanism:

\begin{align}\label{datamax}
    [\mathcal{M}(\mathbf{D}^t)]_i =
    \begin{cases}
        \tilde{\mathcal{E}^t_i}  & \text{if } t < t^{s-opt}_i\\
        \mathcal{E}^t_i, \text{such that } im_i(t) =c_i\Delta_i + \epsilon   & \text{else, if such $\mathcal{E}^t_i$ exists}\\
        \mathcal{E}_N & \text{else}.
    \end{cases}
\end{align}
Even without external incentivization, $i$ will continue to produce data until $t^{s-opt}_i$. From here on, $i$ will receive a model $\mathcal{E}^t_i$ such that the improvement rate just makes up for his incurred cost, so that his utility does not decrease. At some point, $i$ will receive the best possible model $\mathcal{E}^t_N$ or something very close to this, so it is no longer possible to keep the average improvement rate over $c_i\Delta_i$. Now, $i$ will receive the best possible model and no further incentivization is possible. This leads to the result expressed in the following theorem.  
\begin{theorem}[Data maximization with known costs]\label{theo1}
    The mechanism $\mathcal{M}$ defined by equation \ref{datamax} is data-maximizing for $\epsilon \rightarrow 0^+$. A rational agent $i$ will contribute $\Delta_it_i^{opt}$ data points where $\Delta_it_i^{opt} \geq \Delta_it_i^{s-opt}$, yielding a total of $\sum_{j \in N} \Delta_jt_j^{opt}$ data points.
\end{theorem}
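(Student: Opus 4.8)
The plan is to prove the statement in three stages: first establish that the mechanism of (\ref{datamax}) is \emph{admissible} (feasible and individually rational), so that it is a legitimate competitor in the maximization of Definition \ref{defDM}; second, characterize the stopping behavior of a rational agent under it and derive the inequality $\Delta_i t_i^{opt} \geq \Delta_i t_i^{s-opt}$; and third, show that no admissible mechanism can keep any agent producing strictly longer, which is what ``data-maximizing'' demands. For admissibility, feasibility is immediate since $\mathcal{E}_N^t$ is the best estimator available, so every returned estimator satisfies $v(\mathcal{E}_i^t, \mathcal{E}_N^t) \leq v(\mathcal{E}_N^t, \mathcal{E}_N^t)$, exactly as already noted before the theorem. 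For individual rationality I would use that $im_i(t)$ is increasing in the current-estimator quality $v(\mathcal{E}_i^t, \mathcal{E}_N^t)$ (it enters the sum with a positive sign): branch one returns $\tilde{\mathcal{E}}_i^t$ itself, giving equality, while in branches two and three one checks that past $t_i^{s-opt}$ the own-data estimator no longer meets the participation threshold, so the estimator calibrated to rate $c_i\Delta_i + \epsilon$ is weakly better, i.e.\ $v(\mathcal{E}_i^t,\mathcal{E}_N^t) \geq v(\tilde{\mathcal{E}}_i^t,\mathcal{E}_N^t)$.

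For the agent's behavior I would split the timeline at $t_i^{s-opt}$. For $t < t_i^{s-opt}$ the agent receives only $\tilde{\mathcal{E}}_i^t$, so his situation is identical to the single-agent model of Definition \ref{optdata}, and by definition of $t_i^{s-opt}$ his utility stays non-negative and he keeps producing. For $t \geq t_i^{s-opt}$, while branch two applies the mechanism forces $im_i(t) = c_i\Delta_i + \epsilon$, so the per-step utility is exactly $\epsilon > 0$ and the agent still produces; he halts only at the first timestep where branch three is reached, i.e.\ where even returning $\mathcal{E}_N^t$ cannot push $im_i(t)$ up to $c_i\Delta_i + \epsilon$. Calling this time $t_i^{opt}$ and letting $\epsilon \to 0^+$, it is precisely the first $t$ at which the maximal achievable improvement rate drops below $c_i\Delta_i$. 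Since branches two and three occur only for $t \geq t_i^{s-opt}$ and the agent provably produces through branch one, we get $t_i^{opt} \geq t_i^{s-opt}$, hence $\Delta_i t_i^{opt} \geq \Delta_i t_i^{s-opt}$.

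For optimality I would argue by a dominance argument that returning at every step the \emph{minimal} estimator keeping the agent just above the participation threshold leaves the maximal ``room for future improvement'' and therefore cannot stop any agent earlier than any alternative. Concretely I would fix the other agents' equilibrium contributions, so that the benchmark trajectory $\{\mathcal{E}_N^t\}_t$ is determined, and run an induction on $t$ with the invariant that our mechanism has returned weakly worse past estimators $\mathcal{E}_i^{t'}$ than any competing admissible mechanism while still keeping the agent producing. Because $-v(\cdot,\mathcal{E}_N^t)$ is decreasing in estimator quality and appears with positive sign in $im_i(t)$, weakly worse past estimators make the improvement rate our mechanism can still realize at time $t$ (using $\mathcal{E}_N^t$) at least that of the competitor; hence whenever the competitor can keep the agent producing, so can ours, and ours does so spending minimally, preserving the invariant. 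Summing the resulting per-agent maxima at the equilibrium yields the total $\sum_{j \in N}\Delta_j t_j^{opt}$.

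The hard part will be this last stage. Making the ``minimal spending leaves maximal room'' intuition rigorous is delicate because of the weighted average with weights $1/(t-t')$ and the normalization $1/(t-1)$ in $im_i(t)$: I will need to show that the induction invariant of weakly-worse past estimators is genuinely \emph{preserved} under the greedy calibration, rather than merely plausible. A second obstacle is the coupling between agents through the shared benchmark $\mathcal{E}_N^t$, which itself depends on all contributions; I expect to need a fixed-point/equilibrium argument establishing that the profile in which every agent contributes up to its own $t_i^{opt}$ is consistent, so that the per-agent bounds can be summed. The admissibility and single-agent-comparison stages I expect to be routine by contrast.
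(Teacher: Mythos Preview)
Your plan is essentially the paper's own argument. The paper packages your third stage as a separate Lemma (``for any feasible, IR mechanism $\hat{\mathcal{M}}$ and any fixed $\mathbf{D}_{-i}$, the stopping time under $\mathcal{M}$ weakly dominates that under $\hat{\mathcal{M}}$'') and proves it by contradiction rather than by your forward induction, but the content is the same: because $\mathcal{M}$ returns in phase one the own-data estimator (the IR floor) and in phase two the \emph{minimal} estimator meeting the threshold, its history of returned estimators is pointwise weakly worse than that of any competing $\hat{\mathcal{M}}$; since worse past estimators enter $im_i(t)$ with the sign that \emph{raises} the improvement rate, at the first time $t$ where $\mathcal{M}$ (already playing $\mathcal{E}_N^t$) falls below $c_i\Delta_i$, any $\hat{\mathcal{M}}$---with a weakly better past and a current estimator bounded above by $\mathcal{E}_N^t$---must fall below it too. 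That is exactly your ``minimal spending leaves maximal room'' invariant, so the induction you worry about is indeed what carries the proof; the paper simply asserts the invariant phase-by-phase rather than spelling out the inductive step.

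On your second anticipated obstacle, the coupling through the shared benchmark, the paper's resolution is lighter than a fixed-point argument: it fixes $\mathbf{D}_{-i}$ when comparing mechanisms and then observes that, because every agent produces a fixed $\Delta_j$ per step while still active, up to the earlier stopping time \emph{all} agents have contributed identical data under both mechanisms, so the benchmark trajectory $\{\mathcal{E}_N^t\}$ coincides and the per-agent comparison is clean. Your admissibility check and your derivation of $t_i^{opt}\geq t_i^{s-opt}$ match the paper's equilibrium paragraph verbatim in spirit.
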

A proof for Theorem \ref{theo1} is presented in Appendix \ref{proofs}.

\subsection{Achieving Fairness}
While the mechanism described above maximizes the amount of data extracted from each agent, it does not satisfy fairness. In parallel to work done by Qiao and colleagues \cite{pmlr-v202-qiao23a}, we can use our data valuation function $v(\mathcal{E}, \mathcal{B})$, to design a reward scheme $r_i^t$ for every agent $i \in N$ at each time $t \in T$, which fulfills the following fairness conditions defined in the work of Qiao and colleagues \cite{pmlr-v202-qiao23a}. At each $t \in T$, let $v^t_{\emptyset} := min _{i \in N}v(\mathcal{E}^t_i, \mathcal{E}^t_N)$ for the empty coalition $C = \emptyset$. Then: 
        \begin{itemize}
            \item \textbf{(F1) Uselessness:} Agent $i$ should receive a valueless reward if his data does not improve the estimator of any other coalitions.
           \item \textbf{(F2) Symmetry:} If two agents yield the same improvement for all other coalitions, then they should receive equally valuable estimates as rewards.
            \item \textbf{(F3) Strict Desirability:} If data from agent $i$ strictly improves the estimate for at least one coalition more than that of agent $j$, but the reverse is not true, then agent $i$ should receive a more valuable reward than agent $j$.
            \item \textbf{(F4) Monotonicity:} For an agent $i$, if his dataset $D_i$ strictly improves the estimate for at least one coalition more compared to that of another dataset $D_{i'}$, but the reverse is not true, then sharing $D_i$ should give agent $i$ more valuable reward than sharing $D_{i'}$.
        \end{itemize} 
At each $t \in T$, let $v^t_{\emptyset} := min _{i \in N}v(\mathcal{E}^t_i, \mathcal{E}^t_N)$ for the empty coalition $C = \emptyset$. The marginal contribution $m^t_i(T)$ of the data of agent $i$ to a coalition $T \subseteq N$ at time $t$ is then defined as $m^t_i(T) := v(\mathcal{E}^t_{T \cup \{i\}}, \mathcal{E}^t_N) - v(\mathcal{E}^t_T, \mathcal{E}^t_N)$, 
with $\mathcal{E}^t_N$ being the estimator obtained from all data available to the grand coalition $N$. As proposed by Qiao and colleagues, the Shapley Value (SV) can then be calculated as 
    $\phi^t_i := \frac{1}{|N|!} \sum_{T \subseteq N \setminus \{i\}} |T|! \cdot \left( |N| - |T| - 1 \right)! \cdot m^t_i(T)$. 
Using this, the modified $\rho$-Shapley fair reward value is defined by
\begin{align}
    r^t_i := max \left\{ v\left(\mathcal{E}^t_i, \mathcal{E}^t_N\right) - v^t_{\emptyset}, -v^t_{\emptyset} \cdot \left(\frac{\phi^t_i}{\phi^{t*}}\right)^{\rho} \right\},
\end{align}
with scaling factor $\rho \in (0, 1]$ and $\phi^{t*} := max_{i \in N} \phi^t_i$. Using this keeps rewards lower than when using the standard SV while still ensuring the fairness conditions \cite{pmlr-v202-qiao23a}. Thus, agents can be incentivized to produce data over a longer period than by using the standard SV. For $\rho \rightarrow 0$, $r^t_i \rightarrow 0$, thus all agents receive the same reward. The larger $\rho$, the larger the differences between agent's rewards, and thus the more fairness is achieved. A detailed discussion on the effect of the $\rho$ parameter has been done by Qiao and colleagues \cite{pmlr-v202-qiao23a}.

We can use this reward scheme, to ensure that agents with more valuable data receive a better estimator than other agents. For this, when computing the modified $\rho$-Shapley fair reward value $r^t_i$, we need to set $\rho \leq \min_{i \in N} log(1 - v\left(\mathcal{E}^t_i, \mathcal{E}^t_N\right) / v^t_{\emptyset})/ log(\phi^t_i / \phi^{t*})$ \cite{pmlr-v202-qiao23a}. Now we can define our fair mechanism as:
\begin{align}
    [\mathcal{M}_{fair}(\mathbf{D}^t)]_i =
    \begin{cases}
        \tilde{\mathcal{E}^t_i}  & \text{if } t < t^{s-opt}_i\\
        \mathcal{E}^t_i, \text{such that } im_i(t) =c_i\Delta_i + r^t_i   & \text{else, if such $\mathcal{E}^t_i$ exists}\\
        \mathcal{E}_N & \text{else}.
    \end{cases}
\end{align}

\begin{proposition}
    $\mathcal{M}_{fair}$ ensures fairness since agents with more valuable data receive a better estimator \cite{pmlr-v202-qiao23a}. The agent with the least valuable data receives an estimator which just ensures that his average improvement rate balances his costs, agents with more valuable data receive estimators with larger improvement rates. This, in turn, means that $\mathcal{M}_{fair}$ is not data maximizing anymore, because agents with data of high value receive better estimators faster than they would under $\mathcal{M}$. 
\end{proposition}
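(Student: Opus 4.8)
\emph{Proof plan.} The plan is to prove the three assertions bundled into the proposition separately: the fairness conditions (F1)--(F4) I would inherit from the $\rho$-Shapley construction of Qiao and colleagues \cite{pmlr-v202-qiao23a}, while the genuine work goes into the failure of data maximization. Throughout I use that $v \leq 0$ with $v(\mathcal{E}^t_N,\mathcal{E}^t_N)=0$, so the benchmark value is maximal.

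\emph{Step 1 (simplify the reward).} First I would show that under the stated constraint $\rho \leq \min_{i}\log\!\big(1 - v(\mathcal{E}^t_i,\mathcal{E}^t_N)/v^t_\emptyset\big)/\log(\phi^t_i/\phi^{t*})$ the maximum defining $r^t_i$ is always attained by its Shapley term. Multiplying the constraint through by the nonpositive quantity $\log(\phi^t_i/\phi^{t*})$ reverses the inequality and, after exponentiating, yields $-v^t_\emptyset(\phi^t_i/\phi^{t*})^\rho \geq v(\mathcal{E}^t_i,\mathcal{E}^t_N)-v^t_\emptyset$, so $r^t_i = -v^t_\emptyset(\phi^t_i/\phi^{t*})^\rho$. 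Since $-v^t_\emptyset \geq 0$ and $\phi^t_i/\phi^{t*}\in[0,1]$, this reward is nonnegative and nondecreasing in $\phi^t_i$; a useless agent ($\phi^t_i=0$) receives $r^t_i=0$, matching (F1), and a maximal contributor receives $r^t_i=-v^t_\emptyset$. The properties (F1)--(F4) for the reward values themselves then hold by \cite{pmlr-v202-qiao23a}.

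\emph{Step 2 (rewards to estimator quality).} Next I would transfer this ordering to the returned estimators. In the second branch of $\mathcal{M}_{fair}$, $\mathcal{E}^t_i$ is chosen so that $im_i(t)=c_i\Delta_i+r^t_i$; reading $im_i(t)$ (Equation \ref{eq:improvement}) as a function of the chosen $\mathcal{E}^t_i$, the returned estimator enters only through the terms $v(\mathcal{E}^t_i,\mathcal{E}^t_N)$, so $im_i(t)$ is strictly increasing in the quality of $\mathcal{E}^t_i$. Hence a larger target $c_i\Delta_i+r^t_i$ forces a strictly better estimator, and agents with larger $\phi^t_i$ receive strictly better estimators. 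The least valuable (useless) agent gets $r^t_i=0$ and therefore an estimator whose improvement rate exactly balances his cost $c_i\Delta_i$. This establishes the fairness claim.

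\emph{Step 3 (not data-maximizing, the crux).} Finally I would compare $\mathcal{M}_{fair}$ step-by-step with the data-maximizing mechanism $\mathcal{M}$ of Equation \ref{datamax}, whose target is the smaller $c_i\Delta_i+\epsilon$ with $\epsilon\to 0^+$. For any agent with $\phi^t_i>0$ we have $r^t_i>\epsilon$, so in the incentivization phase $\mathcal{M}_{fair}$ must return a strictly better estimator than $\mathcal{M}$ at each step. The second branch fails --- forcing the third branch, which returns the full model $\mathcal{E}_N$ and thereby ends incentivization --- precisely when even $\mathcal{E}^t_N$ cannot meet the target, i.e. $im_i(t)\big|_{\mathcal{E}^t_i=\mathcal{E}^t_N}<c_i\Delta_i+r^t_i$. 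Two effects push this moment earlier under $\mathcal{M}_{fair}$: the target is larger, and the previously returned better estimators $\mathcal{E}^{t'}_i$ make the attainable $im_i(t)$ smaller, since larger $v(\mathcal{E}^{t'}_i,\mathcal{E}^t_N)$ shrinks the historical summands $-v(\mathcal{E}^{t'}_i,\mathcal{E}^t_N)$. Hence a high-value agent is driven to $\mathcal{E}_N$, and stops producing data, strictly earlier; his stopping time $t^{opt}_i$ shrinks and the total $\sum_{j\in N}\Delta_j t^{opt}_j$ falls strictly below the maximum attained by $\mathcal{M}$. Since $\mathcal{M}$ is data-maximizing by Theorem \ref{theo1} and Definition \ref{defDM}, $\mathcal{M}_{fair}$ is not.

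\emph{Main obstacle.} The hard part will be making Step 3 rigorous: that ``a better estimator delivered earlier makes the agent stop earlier'' is a monotonicity statement about the stopping time viewed as a functional of the entire sequence of returned estimators, where the received estimators simultaneously raise the instantaneous target and feed back into $im_i(t)$ through its historical terms. One must verify by induction that these two effects reinforce rather than cancel, so that the utility $u_i(t)=im_i(t)-c_i\Delta_i$ turns negative no later under $\mathcal{M}_{fair}$ than under $\mathcal{M}$. Some care is also needed for degenerate cases --- the symmetric case, where $r^t_i=-v^t_\emptyset>0$ is still strictly positive, and the trivial case $v^t_\emptyset=0$, where both mechanisms coincide --- to pin down exactly when the comparison is strict.
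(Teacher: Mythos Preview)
Your proposal is sound and considerably more thorough than the paper's own treatment. The paper's entire proof is a single sentence: ``This proposition follows directly from the construction of the reward scheme and the fact that $\mathcal{M}$, as defined in equation \ref{datamax}, is data maximizing with $\mathcal{M} \neq \mathcal{M}_{fair}$.'' In other words, the paper simply asserts that the fairness properties come from the cited $\rho$-Shapley construction and that non-identity with the data-maximizing mechanism suffices for the negative claim.

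Your three-step argument actually fills the gap the paper leaves open: from $\mathcal{M}\in\argmax$ and $\mathcal{M}_{fair}\neq\mathcal{M}$ alone one cannot conclude $\mathcal{M}_{fair}\notin\argmax$, since Definition \ref{defDM} does not assert uniqueness of the maximizer. Your Step 3 supplies exactly the missing monotonicity argument --- strictly larger targets $c_i\Delta_i+r^t_i$ and strictly better historical estimators jointly force an earlier crossing time --- that turns the informal ``agents with high-value data receive better estimators faster'' into a genuine strict inequality $\sum_j\Delta_j t^{opt}_j<\sum_j\Delta_j t^{\mathcal{M}}_j$. Your caution about making the induction rigorous and about the degenerate cases (all agents symmetric, or $v^t_\emptyset=0$) is well placed; the paper does not address either. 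In short, you have written the proof the paper only gestures at.
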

This proposition follows directly from the construction of the reward scheme and the fact that $\mathcal{M}$, as defined in equation \ref{datamax}, is data maximizing with $\mathcal{M} \neq \mathcal{M}_{fair}$.

\section{Related Work}\label{rw}
The data valuation scheme presented in this paper is inspired by the scheme presented by Qiao and colleagues \cite{pmlr-v202-qiao23a}, in particular w.r.t.\ the use of negative reverse KL divergence. 
However, they only performed one regression to estimate one effect, they did not estimate a CPDAG and did not estimate the causal effects between all pairs of variables.

For this reason, a measure for distances between CPDAGs was needed. One possible measure here would be the structural hamming distance (SHD). However, as described by Peters and Bühlmann \cite{peters2014structural}, the SHD does not necessarily describe well, how different two graphs are regarding the causal effects they decode. They therefore introduce the structural intervention distance (SID), which, given two DAGs $\mathcal{H}$ and $\mathcal{G}$ over the same vertices $\mathbf{V}$, describes how many intervention distributions between variable pairs are falsely estimated by $\mathcal{H}$ wrt $\mathcal{G}$. The SID can be applied to CPDAGs as well, here however it returns a range of the SIDs between all possible DAG extensions of the CPDAGs, whereas here a definite measure was needed.

A data maximizing mechanism for federated learning was first presented by Karimireddy and colleagues \cite{karimireddy2022mechanisms}. They assumed, however, that each data point provided by any agent had the same value and that the quality of a model could be directly computed. In our setting of collaborative causal inference, these assumptions do not hold anymore, therefore the data valuation scheme was introduced, both to measure the value of an agent's data as well as to measure the quality of a given estimator.

\section{Conclusion and Future Work}
This paper presents novel mechanisms for CCI aimed at motivating the engagement of self-interested agents through the provision of superior causal effect estimators compared to those agents that could develop independently. Our initial mechanism incentivizes all participating agents to maximize data contribution, thereby facilitating the construction of a comprehensive model based on extensive data and expected to yield high accuracy. Building upon this, our second mechanism assigns better estimators to agents supplying higher-quality data, ensuring a fair reward allocation given individual contributions.

A limitation of our mechanism lies in the assumption of known data production costs $c_i$ for agents, prompting a future research direction towards maximizing data despite unknown $c_i$'s. Furthermore, the presumption of honesty and non-malicious behavior among all agents may not hold in practical scenarios, with some parties potentially exploiting the transparent framework for personal gain or harm to others. Consequently, a more robust mechanism resilient to such behavior would be preferable.

\subsection*{Acknowledgements}
Björn Filter acknowledges funding for project FPOplus  by BMDV/TÜV Rheinland Consulting GmbH.

\bibliographystyle{splncs03_unsrt.bst}
\bibliography{refs}

\begin{thebibliography}{10}
\providecommand{\url}[1]{\texttt{#1}}
\providecommand{\urlprefix}{URL }

\bibitem{shi2022learning}
Shi, J., Norgeot, B.: Learning causal effects from observational data in
  healthcare: a review and summary. Frontiers in Medicine  9,  864882 (2022)

\bibitem{hu2018application}
Hu, P., Jiao, R., Jin, L., Xiong, M.: Application of causal inference to
  genomic analysis: advances in methodology. Frontiers in Genetics  9,  238
  (2018)

\bibitem{yao2021survey}
Yao, L., Chu, Z., Li, S., Li, Y., Gao, J., Zhang, A.: A survey on causal
  inference. ACM Transactions on Knowledge Discovery from Data (TKDD)  15(5),
  1--46 (2021)

\bibitem{liang2016causal}
Liang, D., Charlin, L., Blei, D.M.: Causal inference for recommendation. In:
  Causation: Foundation to Application, Workshop at UAI. AUAI (2016)

\bibitem{cordero2018causal}
Cordero, J.M., Crist{\'o}bal, V., Sant{\'\i}n, D.: Causal inference on
  education policies: A survey of empirical studies using pisa, timss and
  pirls. Journal of Economic Surveys  32(3),  878--915 (2018)

\bibitem{sun2015causal}
Sun, W., Wang, P., Yin, D., Yang, J., Chang, Y.: Causal inference via sparse
  additive models with application to online advertising. In: Proceedings of
  the AAAI Conference on Artificial Intelligence. vol.~29 (2015)

\bibitem{singer2014human}
Singer, R.S., Williams-Nguyen, J.: Human health impacts of antibiotic use in
  agriculture: A push for improved causal inference. Current Opinion in
  Microbiology  19,  1--8 (2014)

\bibitem{masic2008evidence}
Masic, I., Miokovic, M., Muhamedagic, B.: Evidence based medicine--new
  approaches and challenges. Acta Informatica Medica  16(4),  219 (2008)

\bibitem{bareinboim2016causal}
Bareinboim, E., Pearl, J.: Causal inference and the data-fusion problem.
  Proceedings of the National Academy of Sciences  113(27),  7345--7352 (2016)

\bibitem{pareek2011sequencing}
Pareek, C.S., Smoczynski, R., Tretyn, A.: Sequencing technologies and genome
  sequencing. Journal of applied genetics  52,  413--435 (2011)

\bibitem{karimireddy2022mechanisms}
Karimireddy, S.P., Guo, W., Jordan, M.I.: {Mechanisms that Incentivize Data
  Sharing in Federated Learning}. Papers 2207.04557, arXiv.org (Jul 2022),
  \url{https://ideas.repec.org/p/arx/papers/2207.04557.html}

\bibitem{pmlr-v202-qiao23a}
Qiao, R., Xu, X., Low, B.K.H.: Collaborative causal inference with fair
  incentives. In: Krause, A., Brunskill, E., Cho, K., Engelhardt, B., Sabato,
  S., Scarlett, J. (eds.) Proceedings of the 40th International Conference on
  Machine Learning. Proceedings of Machine Learning Research, vol. 202, pp.
  28300--28320. PMLR (23--29 Jul 2023),
  \url{https://proceedings.mlr.press/v202/qiao23a.html}

\bibitem{Pearl09}
Pearl, J.: Causality. Cambridge University Press, Cambridge, UK, 2 edn. (2009)

\bibitem{kalisch2007estimating}
Kalisch, M., B{\"u}hlman, P.: Estimating high-dimensional directed acyclic
  graphs with the pc-algorithm. Journal of Machine Learning Research  8(3)
  (2007)

\bibitem{chickering2002optimal}
Chickering, D.M.: Optimal structure identification with greedy search. Journal
  of machine learning research  3(Nov),  507--554 (2002)

\bibitem{chickering2013}
Chickering, D.M.: Learning equivalence classes of bayesian networks structures.
  CoRR  abs/1302.3566 (2013), \url{http://arxiv.org/abs/1302.3566}

\bibitem{maathuis2009estimating}
Maathuis, M.H., Kalisch, M., Bühlmann, P.: Estimating high-dimensional
  intervention effects from observational data. The Annals of Statistics
  37(6A),  3133--3164 (2009)

\bibitem{peters2014structural}
Peters, J., B\"{u}hlmann, P.: Structural intervention distance for evaluating
  causal graphs. Neural Comput.  27(3),  771–799 (Mar 2015)

\bibitem{MeekRules}
Meek, C.: Causal inference and causal explanation with background knowledge.
  In: Proceedings of the Eleventh Conference on Uncertainty in Artificial
  Intelligence. p. 403–410. UAI'95, Morgan Kaufmann Publishers Inc., San
  Francisco, CA, USA (1995)

\bibitem{Shpitser2010}
Shpitser, I., VanderWeele, T., Robins, J.M.: On the validity of covariate
  adjustment for estimating causal effects. In: Proceedings of the Twenty-Sixth
  Conference on Uncertainty in Artificial Intelligence. p. 527–536. UAI'10,
  AUAI Press, Arlington, Virginia, USA (2010)

\bibitem{guo2021minimal}
Guo, R., Perkovic, E.: Minimal enumeration of all possible total effects in a
  markov equivalence class. In: Banerjee, A., Fukumizu, K. (eds.) Proceedings
  of The 24th International Conference on Artificial Intelligence and
  Statistics. Proceedings of Machine Learning Research, vol. 130, pp.
  2395--2403. PMLR (13--15 Apr 2021)

\bibitem{PetersJanzingSchoelkopf17}
Peters, J., Janzing, D., Sch{\"o}lkopf, B.: Elements of Causal Inference:
  Foundations and Learning Algorithms. Adaptive Computation and Machine
  Learning, MIT Press (2017)

\bibitem{shachter2013bayes}
Shachter, R.D.: Bayes-ball: The rational pastime (for determining irrelevance
  and requisite information in belief networks and influence diagrams). arXiv
  preprint arXiv:1301.7412  (2013)

\bibitem{wienöbst2023efficient}
Wien\"{o}bst, M., Luttermann, M., Bannach, M., Li\'{s}kiewicz, M.: Efficient
  enumeration of markov equivalent dags. In: Proceedings of the Thirty-Seventh
  AAAI Conference on Artificial Intelligence and Thirty-Fifth Conference on
  Innovative Applications of Artificial Intelligence and Thirteenth Symposium
  on Educational Advances in Artificial Intelligence. AAAI'23/IAAI'23/EAAI'23,
  AAAI Press (2023)

\end{thebibliography}

\appendix
\section{Distribution Structural Intervention Distance}\label{appdSID}
To see how the dSID can be computed, we will first need some further preliminaries, consisting of some more graph-theoretical definitions and the concept of covariate adjustment.

\subsection{Further Preliminaries}
A \textit{partially directed acyclic graph (PDAG)} is a partially directed graph without directed cycles. A PDAG $\mathcal{G}$ is a \textit{maximally oriented PDAG (MPDAG)} if and only if the edge orientations in $\mathcal{G}$ are complete under the Meek orientation rules \cite{MeekRules}.

To identify the total causal effect of a variable $V_i \in \mathbf{V}$ on a variable $V_j \in \mathbf{V}$, one has to find an expression for $P(y \mid do(x))$. Using only the preintervention probabilities of the observed variables $\mathbf{V}$, this can be done through \textit{covariate adjustment}. Here, a so-called \textit{adjustment set} is used, to estimate the causal effect \cite{Pearl09}. A graph-theoretical criterion can be used to determine, whether a set $\mathbf{Z} \subseteq \mathbf{V} \mathbin{\backslash} \lbrace V_i, V_j \rbrace$ is an adjustment set wrt two nodes $V_i, V_j \in \mathbf{V}$. This is the so-called adjustment criterion:

\begin{definition}[Adjustment Criterion for DAGs \cite{Shpitser2010}]\label{ACdef}
Let $\mathcal{D}  = \mathbf{(V, E)} $ be a DAG, $V_i, V_j \in \mathbf{V}$ two nodes and $\mathbf{Z} \subseteq \mathbf{V} \mathbin{\backslash} \lbrace V_i, V_j \rbrace$. The set $\mathbf{Z}$ satisfies the adjustment criterion relative to $(V_i, V_j)$ in $\mathcal{D}$ if 
\begin{enumerate}[label=(\alph*)]
\item
no element in $\mathbf{Z}$ is a descendant in $\mathcal{D}$ of any $W \in \mathbf{V} \mathbin{\backslash} V_i$ which lies on a causal path from $V_i$ to $V_j$ and
\item
all  non-causal paths in $\mathcal{D}$ from $V_i$ to $V_j$ are d-separated by $\mathbf{Z}$.
\end{enumerate}
\end{definition}

The adjustment criterion can then be used to find adjustment sets:

\begin{theorem}[\cite{Shpitser2010}]
Let $\mathcal{D}  = \mathbf{(V, E)} $ be a DAG, $V_i, V_j \in \mathbf{V}$ two nodes and $\mathbf{Z} \subseteq \mathbf{V} \mathbin{\backslash} \lbrace V_i, V_j \rbrace$. $\mathbf{Z}$ is an adjustment set wrt $(V_i, V_j)$ in $\mathcal{D}$, if and only if $\mathbf{Z}$ satisfies the adjustment criterion (definition \ref{ACdef}) relative to $(V_i, V_j)$ in $\mathcal{D}$.
\end{theorem}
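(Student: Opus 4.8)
The plan is to unpack ``$\mathbf{Z}$ is an adjustment set'' into its operational meaning, namely that the adjustment formula $P(v_j \mid do(V_i = v_i)) = \sum_{\mathbf{z}} P(v_j \mid V_i = v_i, \mathbf{Z} = \mathbf{z})\,P(\mathbf{Z} = \mathbf{z})$ holds for \emph{every} distribution $P$ that is Markov relative to $\mathcal{D}$, and then to prove the two implications separately. The tools I would rely on are the truncated factorization for $P_{\mathcal{D}}(\cdot \mid do(\cdot))$ from Section~\ref{prelim}, Pearl's do-calculus \cite{Pearl09}, and the graph mutilations it uses (deleting the edges into or out of $V_i$).

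For the sufficiency direction (adjustment criterion $\Rightarrow$ adjustment set) I would start from $P(v_j \mid do(v_i)) = \sum_{\mathbf{z}} P(v_j \mid do(v_i), \mathbf{z})\,P(\mathbf{z} \mid do(v_i))$. Condition~(b) of Definition~\ref{ACdef}, that $\mathbf{Z}$ blocks every non-causal path from $V_i$ to $V_j$, is designed to translate into the d-separation $V_i \independent V_j \mid \mathbf{Z}$ in the proper back-door graph of $\mathcal{D}$ (obtained by deleting the first edge of every causal path leaving $V_i$); by Rule~2 of do-calculus this gives $P(v_j \mid do(v_i), \mathbf{z}) = P(v_j \mid v_i, \mathbf{z})$. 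It then remains to show $\sum_{\mathbf{z}} P(v_j \mid v_i, \mathbf{z})\,P(\mathbf{z} \mid do(v_i)) = \sum_{\mathbf{z}} P(v_j \mid v_i, \mathbf{z})\,P(\mathbf{z})$, and this is where condition~(a) enters: by forbidding $\mathbf{Z}$ from containing any descendant of a node on a causal path from $V_i$ to $V_j$, it ensures that the only members of $\mathbf{Z}$ whose distribution the intervention alters are conditionally independent of $V_j$ given $V_i$ and the remaining covariates, so that $P(\mathbf{z} \mid do(v_i)) - P(\mathbf{z})$ cancels inside the weighted sum rather than pointwise.

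For the necessity direction (adjustment set $\Rightarrow$ adjustment criterion) I would argue by contraposition, exhibiting for each way the criterion can fail an explicit linear-Gaussian SEM that is Markov to $\mathcal{D}$, with edge coefficients chosen generically to avoid accidental cancellations, for which the adjustment formula is violated. If (b) fails, some non-causal path stays open given $\mathbf{Z}$; placing nonzero weights along it leaves a spurious $V_i$--$V_j$ association that adjustment on $\mathbf{Z}$ cannot remove. If (a) fails, $\mathbf{Z}$ contains a descendant of a node on a causal path, and conditioning on it either partially blocks the causal signal (over-control) or opens a collider (selection bias); in either case the right-hand side no longer equals $P(v_j \mid do(v_i))$.

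The main obstacle is the sufficiency argument under condition~(a). Unlike the classical back-door criterion, the adjustment criterion deliberately permits $\mathbf{Z}$ to contain descendants of $V_i$, so $P(\mathbf{z} \mid do(v_i)) = P(\mathbf{z})$ need \emph{not} hold factorwise and a naive appeal to Rule~3 is invalid; the real content is to show that condition~(a) isolates precisely the descendants whose inclusion is harmless and to make the d-separation bookkeeping in the mutilated graphs rigorous enough to justify the cancellation. This is the heart of the Shpitser--VanderWeele--Robins analysis, and since the statement is quoted verbatim from \cite{Shpitser2010} I would ultimately defer the complete case analysis to that source.
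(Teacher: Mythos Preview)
The paper does not prove this theorem at all: it is stated as a quoted result from \cite{Shpitser2010} and used as a black box in Appendix~\ref{appdSID}, with no accompanying argument. There is therefore no ``paper's own proof'' to compare against; your sketch is an attempt to reconstruct the original Shpitser--VanderWeele--Robins argument, and you correctly identify both the overall structure (sufficiency via do-calculus plus careful handling of permitted descendants of $V_i$, necessity via generic linear-SEM counterexamples) and the genuine technical crux (that condition~(a) does not give $P(\mathbf{z}\mid do(v_i))=P(\mathbf{z})$ pointwise, so a naive Rule~3 step fails and one needs the finer cancellation argument). Since you already defer the full case analysis to \cite{Shpitser2010}, your proposal is consistent with how the paper itself treats the statement.
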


For a DAG $\mathcal{D} = (\mathbf{V, E})$ and two nodes $V_i, V_j \in \mathbf{V}$ let $AS_{(V_i, V_j)}(\mathcal{D}) = \lbrace \mathbf{Z} \mid \mathbf{Z} \subseteq \mathbf{V} \mathbin{\backslash} \lbrace V_i, V_j \rbrace \textit{ and } \mathbf{Z}$ \textit{fulfills the adjustment criterion for} $\mathcal{D} \textit{ wrt } (V_i, V_j) \rbrace$ be the set of all possible adjustment sets wrt $(V_i, V_j)$ in $\mathcal{D}$. It is easy to see that $Pa_{V_i}(\mathcal{D}) \in AS_{(V_i, V_j)}(\mathcal{D})$, because $Pa_{V_i}(\mathcal{D})$ always fulfills the adjustment criterion wrt $(V_i, V_j)$.

\subsection{An Algorithm for Computing the dSID}
Recall, that the dSID is defined as follows: Let $\mathcal{C}_1$ and $\mathcal{C}_2$ be two CPDAGs over the same set of variables $\mathbf{V}$.
\begin{alignat*}{3}
 & dSID:\text{ }  && \mathcal{G} \times \mathcal{G}    && \rightarrow \mathbb{N}\\
        &               && (\mathcal{C}_1, \mathcal{C}_2)    && \mapsto |\{(V_i, V_j) \in \mathbf{V} \times \mathbf{V}, V_i \neq V_j \mid \text{the distribution of causal}\\
        &               &&                                   && \text{effects of $V_i$ on $V_j$ is falsely identified in $\mathcal{C}_1$ with respect to $\mathcal{C}_2$}\}|
\end{alignat*}

with the distribution of causal effects of $V_i$ on $V_j$ given CPDAG $\mathcal{C}$ defined as:
\begin{align}
    P_{\mathcal{C}}(v_j \mid do(V_i = v_i)) = \sum_{\mathcal{D}\in CE(\mathcal{C})}\frac{1}{|CE(\mathcal{C})|} P_{\mathcal{D}}(v_j \mid do(V_i = v_i))
\end{align}

To compute the dSID, for all $(V_i, V_j) \in \mathbf{V} \times \mathbf{V}$, we first need to identify all possible effects of $V_i$ on $V_j$ in all DAG-extensions of both $\mathcal{C}_1$ and $\mathcal{C}_2$. We then need to check, whether all effects found given $\mathcal{C}_1$ can be found given $\mathcal{C}_2$ as well, and vice versa. The dSID Algorithm is depicted in figure \ref{dSID}. It will be described in detail in the following.

As mentioned before, to identify the possible effects of $V_i$ on $V_j$ given a CPDAG $\mathcal{C}$, we can compute $\mathbf{Pa}_{V_i}(\mathcal{C})$ and use these as adjustment sets to compute the effects. However, suppose $\mathcal{D}, \mathcal{D'} \in CE(\mathcal{C})$ with $Pa_{V_i}(\mathcal{D}) \neq Pa_{V_i}(\mathcal{D'})$. It is still possible, that $Pa_{V_i}(\mathcal{D'})$ can be an adjustment set wrt $(V_i, V_j)$ in $\mathcal{D}$ (this is the case, when $Pa_{V_i}(\mathcal{D'})$ fulfills the adjustment criterion wrt $(V_i, V_j)$ in $\mathcal{D}$). Now, the effect of $V_i$ on $V_j$ would be identical given both $\mathcal{D}$ and $\mathcal{D'}$. We will therefore start by identifying the subclasses of DAGs, such that the effect of $V_i$ on $V_j$ in each subclass is identical. For this, the IDGraphs Algorithm \cite{guo2021minimal} can be used (line 3 in figure \ref{dSID}), which runs in $O(2^{m(\mathcal{C})})poly(|\mathbf{V}|)$, where $m(\mathcal{C})$ is the number of undirected edges incident to $V_i$ on a proper possibly causal path from $V_i$ to $V_j$. While $m(\mathcal{C})$ could be potentially as big as $p$, in practice it tends to be small. $poly(|\mathbf{V}|)$ describes the time used to complete the orientation rules of Meek \cite{MeekRules}, which are also used in the PC algorithm. In practice, IDGraphs takes about twice the time of IDA. It returns the minimal set of MPDAGs $\mathbf{M}$, such that $CE(\mathcal{C}) = \cup_{\mathcal{M}\in \mathbf{M}}CE(\mathcal{M})$ and for all $\mathcal{M} \in \mathbf{M}$, for all pairs $(\mathcal{D}, \mathcal{D'}) \in \mathcal{M}$, $Pa_{V_i}(\mathcal{D})$ is an adjustment set wrt $(V_i, V_j)$ in $\mathcal{D'}$.

\begin{figure}[H]
    \begin{algorithm}[H]
        \SetKwInput{KwInput}{Input}
        \SetKwInput{KwOutput}{Output} 
        \KwInput{CPDAGs $\mathcal{C}_1 = (\mathbf{V, E_1})$, $\mathcal{C}_2 = (\mathbf{V, E_2})$}
        \KwOutput{Number of pairs $(V_i, V_j) \in \mathbf{V} \times \mathbf{V}$, for which the distribution of causal effects of $V_i$ on $V_j$ is falsely identified in $\mathcal{C}_1$ with respect to $\mathcal{C}_2$}
        dSID $\leftarrow$ 0\;
        \ForEach{$(V_i, V_j) \in \mathbf{V} \times \mathbf{V}$}{%
            $\mathbf{M}_1 =$ \textbf{IDGraphs}($\mathcal{C}_1, V_i, V_j$), $\mathbf{M}_2 =$ \textbf{IDGraphs}($\mathcal{C}_2, V_i, V_j$)\;
            \ForEach{$\mathcal{M} \in \mathbf{M}_1$}{%
                found $\leftarrow$ false\;
                $\mathcal{D} \leftarrow$ any DAG $\mathcal{D} \in CE(\mathcal{M})$\;
                \ForEach{$\mathcal{M}' \in \mathbf{M}_2$}{%
                    $\mathcal{D}' \leftarrow$ any DAG $\mathcal{D}' \in CE(\mathcal{M}')$\;
                    \If{$Pa_{V_i}(\mathcal{D})$ is an adjustment set wrt $(V_i, V_j)$ for $\mathcal{D}'$}{%
                        \If{$\frac{|CE(\mathcal{M})|}{|CE(\mathcal{C}_1)|} = \frac{|CE(\mathcal{M}')|}{|CE(\mathcal{C}_2)|}$}{%
                            found $\leftarrow$ true\;
                            \textbf{break}\;
                        }
                    }
                }
                \If{$\neg$found}{%
                    dSID $+=1$\;
                    \textbf{break}\;
                }
            }
        }
    \KwRet{dSID}
    \caption{dSID}
    \end{algorithm}
    \caption{The dSID algorithm}
    \label{dSID}
\end{figure}

Let $\mathbf{M}_i$ be the results of running IDGraphs on $\mathcal{C}_i$ for $i \in \{1, 2\}$. We can use these sets to compute the possible effects and their probabilities: For each $\mathcal{M} \in \mathbf{M}_1$, sample any $\mathcal{D} \in \mathcal{M}$ (line 6). $Pa_{V_i}(\mathcal{D})$ can be used to compute the effect of $V_i$ on $V_j$. The effects probability is given by $\frac{CE(\mathcal{M})}{CE(\mathcal{C}_1)}$. Now we need to check whether this effect occurs also given $\mathcal{C}_2$, and with the same probability. To do this, we will need the following Theorem:

\begin{theorem}\label{theo2a}
    Let $\mathcal{D}_1, \mathcal{D}_2$ be two Markov equivalent DAGs with nodes $\mathbf{V}$ and let $V_i, V_j \in \mathbf{V}$ be two nodes. Then $AS_{(V_i, V_j)}(\mathcal{D}_1) \cap AS_{(V_i, V_j)}(\mathcal{D}_2) \neq \emptyset$, if and only if $AS_{(V_i, V_j)}(\mathcal{D}_1) = AS_{(V_i, V_j)}(\mathcal{D}_2)$.
    \end{theorem}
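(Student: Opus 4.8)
The plan is to reduce this purely graphical statement to the probabilistic \emph{soundness and completeness} of the adjustment criterion (the theorem cited from \cite{Shpitser2010} right after Definition~\ref{ACdef}), combined with the defining feature of Markov equivalence: $\mathcal{D}_1$ and $\mathcal{D}_2$ are Markov to exactly the same family of observational distributions. Writing $AS_k := AS_{(V_i,V_j)}(\mathcal{D}_k)$, I would read the membership $\mathbf{Z} \in AS_k$ through that equivalence, i.e.\ as the statement that the adjustment formula $\sum_{\mathbf{z}} P(v_j \mid v_i, \mathbf{z})\,P(\mathbf{z})$ returns the true interventional distribution $P_{\mathcal{D}_k}(v_j \mid do(v_i))$ for every causal model whose graph is $\mathcal{D}_k$. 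The \emph{if} direction is then immediate: since $Pa_{V_i}(\mathcal{D}_1) \in AS_1$ always holds, $AS_1$ is nonempty, so $AS_1 = AS_2$ forces $AS_1 \cap AS_2 = AS_1 \neq \emptyset$.

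For the \emph{only if} direction I would first use a shared adjustment set to pin the interventional distribution to the observational law. Assume $\mathbf{Z}^\ast \in AS_1 \cap AS_2$ and fix any $P$ in the common statistical model. Adjusting with $\mathbf{Z}^\ast$ produces one and the same observational functional $q_P(v_j \mid v_i) := \sum_{\mathbf{z}^\ast} P(v_j \mid v_i, \mathbf{z}^\ast)\,P(\mathbf{z}^\ast)$, and validity of $\mathbf{Z}^\ast$ in each graph equates this functional with both $P_{\mathcal{D}_1}(v_j \mid do(v_i))$ and $P_{\mathcal{D}_2}(v_j \mid do(v_i))$. Hence the two DAGs induce the same total effect of $V_i$ on $V_j$, and crucially this effect is now a fixed functional $q_P$ of the observational distribution, for every $P$ in the class. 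The transfer step then closes the argument: I would take an arbitrary $\mathbf{Z}_1 \in AS_1$ together with any model whose graph is $\mathcal{D}_2$ and whose observational law is $P'$; because $P'$ again lies in the common model, the previous step yields $P_{\mathcal{D}_2}(v_j \mid do(v_i)) = q_{P'}(v_j \mid v_i)$, while validity of $\mathbf{Z}_1$ in $\mathcal{D}_1$ gives $\sum_{\mathbf{z}_1} P'(v_j \mid v_i, \mathbf{z}_1)\,P'(\mathbf{z}_1) = q_{P'}(v_j \mid v_i)$. Thus adjusting by $\mathbf{Z}_1$ reproduces the $\mathcal{D}_2$-interventional distribution for every such model, so \emph{completeness} of the adjustment criterion forces $\mathbf{Z}_1 \in AS_2$; this shows $AS_1 \subseteq AS_2$, and the symmetric argument gives $AS_1 = AS_2$.

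The main obstacle I anticipate is careful quantifier bookkeeping: the interventional distribution is a priori a feature of a causal model, not of the observational law, and it becomes the well-defined functional $q_P$ only because a \emph{common} valid adjustment set exists. I would also stress that the final implication genuinely needs the \emph{completeness} half of \cite{Shpitser2010}---soundness alone lets me match the adjustment formulas but not conclude that $\mathbf{Z}_1$ satisfies the adjustment criterion in $\mathcal{D}_2$---and that the family of distributions Markov to $\mathcal{D}_1$ coincides with the family Markov to $\mathcal{D}_2$ precisely because the two DAGs are Markov equivalent, which is exactly what licenses transporting a distribution from one graph to the other.
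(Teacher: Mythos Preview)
Your proposal is correct and follows essentially the same route as the paper: translate membership in $AS_k$ into the probabilistic adjustment identity via the soundness/completeness theorem of \cite{Shpitser2010}, use Markov equivalence to transport observational distributions between $\mathcal{D}_1$ and $\mathcal{D}_2$ (the paper invokes Proposition~7.1 of \cite{PetersJanzingSchoelkopf17} for this step), and chain the resulting equalities to show that any $\mathbf{Z}'\in AS_1$ yields the correct interventional distribution under every model for $\mathcal{D}_2$. Your write-up is in fact more careful than the paper's in flagging that the final implication genuinely requires the \emph{completeness} direction of \cite{Shpitser2010}; the paper's displayed chain of equalities uses it implicitly when concluding $\mathbf{Z}'\in AS_{(V_i,V_j)}(\mathcal{D}_2)$ from the validity of the adjustment formula.
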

    
    \begin{proof}
    WLOG, suppose there are two sets $\mathbf{Z}, \mathbf{Z}' \subseteq \mathbf{V}\mathbin{\backslash} \lbrace V_i, V_j \rbrace$ with $\mathbf{Z} \in AS_{(V_i, V_j)}(\mathcal{D}_1) \cap AS_{(V_i, V_j)}(\mathcal{D}_2)$ and $\mathbf{Z}' \in AS_{(V_i, V_j)}(\mathcal{D}_1)$.We will show, that $\mathbf{Z}' \in AS_{(V_i, V_j)}(\mathcal{D}_2)$. According to \cite{Shpitser2010}, 
    \begin{align}
    \mathbf{Z}' \in AS_{(V_i, V_j)}(\mathcal{D}_2) & \Leftrightarrow P_{M_2}(v_j \mid do(v_i))\\
    & = \sum_{\mathbf{Z}'}P_{M_2}(v_j \mid v_i, z') P_{M_2}(z')
    \end{align}
    in all distributions $P_{M_2}$ entailed by a Markovian SCM $M_2$ which induces $\mathcal{D}_2$. 
    Since $\mathcal{D}_1$ and $\mathcal{D}_2$ are Markov equivalent, every distribution $P_{M_2}$ is also entailed by some Markovian SCM $M_1$ inducing $\mathcal{D}_1$ and vice versa (see Proposition 7.1 in \cite{PetersJanzingSchoelkopf17}). This gives
    \begin{align}
    \centering
    P_{M_2}(v_j \mid do(v_i)) & = \sum_{\mathbf{Z}}P_{M_2}(v_j \mid v_i, \mathbf{z}) P_{M_2}(\mathbf{z}) && \text{($\mathbf{Z} \in AS_{(V_i, V_j)}(\mathcal{D}_2)$)}&\\[1.25ex]
    & = \sum_{\mathbf{Z}}P_{M_1}(v_j \mid v_i, \mathbf{z}) P_{M_1}(\mathbf{z}) && \text{($P_{M_2}$ entailed by $M_1$)}&\\[1.25ex]
    & = P_{M_1}(v_j \mid do(v_i)) && \text{($\mathbf{Z} \in AS_{(V_i, V_j)}(\mathcal{D}_1)$)}&\\[1.25ex]
    & = \sum_{\mathbf{Z}'}P_{M_1}(v_j \mid v_i, \mathbf{z'}) P_{M_1}(\mathbf{z'}) && \text{($\mathbf{Z'} \in AS_{(V_i, V_j)}(\mathcal{D}_1)$)}&\\[1.25ex]
    & = \sum_{\mathbf{Z}'}P_{M_2}(v_j \mid v_i, \mathbf{z'}) P_{M_2}('\mathbf{z'}) && \text{($P_{M_1}$ entailed by $M_2$)}
    \end{align}
    and thus follows $\mathbf{Z}' \in AS_{(V_i, V_j)}(\mathcal{D}_2)$.
\end{proof}

Using this, we can check, whether all effects found given $\mathcal{C}_1$ can also be found given $\mathcal{C}_2$ with the same probability. For each $\mathcal{M} \in \mathbf{M}_1$, we can find the causal effect by sampling a DAG $\mathcal{D} \in \mathcal{M}$ and using $Pa_{V_i}(\mathcal{D})$ as an adjustment set. The IDGraphs algorithm ensured, that $Pa_{V_i}(\mathcal{D})$ is an adjustment set wrt $(V_i, V_j)$ for all $\mathcal{D}_j \in \mathcal{M}$ and therefore, by theorem \ref*{theo2a}, for all $\mathcal{D}_1, \mathcal{D}_2 \in \mathcal{M}$, $AS_{(V_i, V_j)}(\mathcal{D}_1) = AS_{(V_i, V_j)}(\mathcal{D}_2)$.

Now, iterate over all $\mathcal{M}' \in \mathbf{M}_2$ (line 7), sample a DAG $\mathcal{D}' \in \mathcal{M}'$ (line 8) and check, whether $Pa_{V_i}(\mathcal{D})$ fulfills the adjustment criterion wrt $(V_i, V_j)$ in $\mathcal{D}'$ (line 9). This can be efficiently done by the Bayes-Ball algorithm \cite{shachter2013bayes}. If this is the case, by theorem \ref{theo2a}, we know that, $AS_{(V_i, V_j)}(\mathcal{D}) = AS_{(V_i, V_j)}(\mathcal{D}')$. By the properties of IDGraphs and theorem \ref*{theo2a}, for all $\mathcal{D}_1', \mathcal{D}_2' \in \mathcal{M}'$, $AS_{(V_i, V_j)}(\mathcal{D}_1') = AS_{(V_i, V_j)}(\mathcal{D}_2')$. Thus, finally, for all $\mathcal{D} \in \mathcal{M}$ and ${D}' \in \mathcal{M}'$, $AS_{(V_i, V_j)}(\mathcal{D}) = AS_{(V_i, V_j)}(\mathcal{D}')$. In other words, $P_{\mathcal{D}}(v_j \mid do(V_i = v_i))$ will be identical given any DAG $\mathcal{D} \in \mathcal{M} \cup \mathcal{M}'$. If so, check whether $\frac{|CE(\mathcal{M})|}{|CE(\mathcal{C}_1)|} = \frac{|CE(\mathcal{M}')|}{|CE(\mathcal{C}_2)|}$ (line 10), that is not only the effects are identical but also their probability given $\mathcal{C}_1$ and $\mathcal{C}_2$. This can be done using the Clique-Picking algorithm \cite{wienöbst2023efficient}, which computes the number of DAG-extensions of a given MPDAG in polynomial time. If for all $\mathcal{M} \in \mathbf{M}_1$, a corresponding $\mathcal{M}' \in \mathbf{M}_2$ can be found, $P_{\mathcal{C}_1}(v_j \mid do(V_i = v_i)) = P_{\mathcal{C}_2}(v_j \mid do(V_i = v_i))$ must hold.

Applying this procedure for all pairs $(V_i, V_j) \in \mathbf{V} \times \mathbf{V}$ and counting the instances, in which $P_{\mathcal{C}_1}(v_j \mid do(V_i = v_i))$ and $P_{\mathcal{C}_2}(v_j \mid do(V_i = v_i))$ differ (line 14), results in the dSID. In the worst case, the runtime can amount to $p^2O(2^{m(C)})poly(|V|)2^p2^pO(V)ploy(|V|)=O(2^{2p})$. Here, $p^2$ results from going over all pairs of variables, $O(2^{m(C)})poly(|V|)$ is the runtime of IDGraphs, then there can be up to $2^p$ elements in both $\mathbf{M}_1$ and $\mathbf{M_2}$, the Bayes-Ball algorithm takes $O(|V|)$ time. Finally, the Clique-Picking algorithm runs in $ploy(|V|)$. However, normally CPDAGs tend to be sparse and mostly directing, resulting in very small sets $\mathbf{M}_1$ and $\mathbf{M_2}$. In these cases, IDGraphs and Clique-Picking run fast as well and the algorithm is not significantly slower than IDA.

\section{Proof of Theorem \ref{theo1}}\label{proofs}
\begingroup
\def\thetheorem{\ref{theo1}}
\begin{theorem}[Data maximization with known costs]
    The mechanism $\mathcal{M}$ defined by equation \ref{datamax} is data-maximizing for $\epsilon \rightarrow 0^+$. A rational agent $i$ will contribute $\Delta_it_i^{opt}$ data points where $\Delta_it_i^{opt} \geq \Delta_it_i^{s-opt}$, yielding a total of $\sum_{j \in N} \Delta_jt_j^{opt}$ data points.
\end{theorem}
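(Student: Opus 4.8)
The plan is to proceed in three stages: characterize the trajectory of a rational agent under $\mathcal{M}$, verify that $\mathcal{M}$ is feasible and individually rational, and finally establish maximality over all competing feasible, IR mechanisms, which is the substance of Definition \ref{defDM}.

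First I would trace the agent through the three branches of equation \ref{datamax}. While $t < t_i^{s-opt}$ the server merely returns the agent's own estimator $\tilde{\mathcal{E}}_i^t$ and does not intervene, so the agent keeps producing throughout this phase, which by construction ends exactly at his standalone stopping time $t_i^{s-opt}$ of Definition \ref{optdata}. For $t \geq t_i^{s-opt}$, as long as the mechanism can still hand over an estimator with $im_i(t) = c_i\Delta_i + \epsilon$, the utility is $u_i(t) = \epsilon > 0$ and he does not stop; this remains possible exactly while the best feasible estimator $\mathcal{E}_N^t$ yields improvement rate at least $c_i\Delta_i + \epsilon$. Hence the agent stops at the first time $t_i^{opt}$ at which even $\mathcal{E}_N^t$ drives $im_i(t)$ below $c_i\Delta_i + \epsilon$, which in the limit $\epsilon \to 0^+$ is the first $t$ whose $\mathcal{E}_N^t$-improvement rate lies strictly below $c_i\Delta_i$. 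Since the agent does not stop during the first branch, he continues at least through $t_i^{s-opt}$, whence $t_i^{opt} \geq t_i^{s-opt}$, with strict inequality whenever $\mathcal{E}_N^t$ can still boost his improvement rate above cost at $t_i^{s-opt}$.

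Second, feasibility and IR are bookkeeping. The tuned estimator is chosen to undershoot $\mathcal{E}_N^t$, so $v(\mathcal{E}_i^t, \mathcal{E}_N^t) \leq v(\mathcal{E}_N^t, \mathcal{E}_N^t)$ and feasibility holds; IR holds because in the first branch the agent receives his own estimator, giving the equality $v(\mathcal{E}_i^t,\mathcal{E}_N^t) = v(\tilde{\mathcal{E}}_i^t,\mathcal{E}_N^t)$, while in the later branches raising $im_i(t)$ above the own-data level requires handing over an at-least-as-good estimator, i.e.\ $v(\mathcal{E}_i^t, \mathcal{E}_N^t) \geq v(\tilde{\mathcal{E}}_i^t, \mathcal{E}_N^t)$.

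Third, and this is where the real work lies, I would show that no feasible, IR mechanism keeps agent $i$ producing beyond $t_i^{opt}$. The lever is feasibility: every estimator $i$ can ever receive is capped by $\mathcal{E}_N^t$, hence so is the current-quality term $v(\mathcal{E}_i^t, \mathcal{E}_N^t)$ entering $im_i(t)$. As the agent continues only while $im_i(t) \geq c_i\Delta_i$, and at $t_i^{opt}$ even the most favorable admissible choice---handing over $\mathcal{E}_N^t$ outright---violates this inequality in the $\epsilon \to 0^+$ limit, no admissible reward can keep him from stopping. Summing the per-agent bound over $i \in N$, which is legitimate because each agent's stopping decision depends only on the estimators he himself receives and on the shared benchmark $\mathcal{E}_N^t$, yields the total $\sum_{j \in N}\Delta_j t_j^{opt}$ and identifies $\mathcal{M}$ with the maximizer of Definition \ref{defDM}. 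The main obstacle is precisely the path-dependence of $im_i(t)$: because it averages improvements over the whole received history, a rival mechanism might withhold quality early and release a large jump to $\mathcal{E}_N^t$ late, inflating the late average. I expect the crux to be showing that pinning the per-step improvement at the threshold $c_i\Delta_i$, as $\mathcal{M}$ does for $\epsilon \to 0^+$, is exactly the schedule that spreads the bounded attainable improvement over the greatest number of timesteps, so that no history surviving to $t_i^{opt}$ can be extended strictly further without breaching the feasibility cap at some step.
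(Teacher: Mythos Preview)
Your three-stage outline matches the paper's structure: trace the agent through the branches of equation~\ref{datamax}, verify feasibility and IR, then argue maximality. Stages one and two are essentially what the paper does, and your observation that $t_i^{opt}\geq t_i^{s-opt}$ because the first branch already carries the agent to $t_i^{s-opt}$ is exactly the paper's reasoning.

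The gap is in stage three. Your sentence ``at $t_i^{opt}$ even the most favorable admissible choice---handing over $\mathcal{E}_N^t$ outright---violates this inequality'' is true only for the history produced by $\mathcal{M}$. As you yourself point out immediately afterwards, $im_i(t)$ is path-dependent, so under a rival $\hat{\mathcal{M}}$ with a different history the same $\mathcal{E}_N^t$ could in principle still clear the threshold. You identify the obstacle correctly and conjecture the right intuition (threshold-pinning spreads the bounded available improvement over the largest number of steps), but you stop at ``I expect the crux to be\ldots''.

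The paper closes this gap with a separate lemma proved by contradiction. Suppose some feasible IR mechanism $\hat{\mathcal{M}}$ keeps $i$ producing strictly past $t_i^{\mathcal{M}}$. One then argues that at every $t\leq t_i^{\mathcal{M}}$ the estimator $\hat{\mathcal{M}}$ returns is at least as good as the one $\mathcal{M}$ returns: in phase one ($t<t_i^{s-opt}$) this is forced by IR, since $\mathcal{M}$ returns exactly $\tilde{\mathcal{E}}_i^t$ and $\hat{\mathcal{M}}$ cannot go below it; in phase two it holds because $\mathcal{M}$ is by construction handing out the \emph{worst} estimator that still achieves $im_i(t)=c_i\Delta_i+\epsilon$, so any mechanism that also keeps $i$ going can only do so with something at least as good. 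But then at $t_i^{\mathcal{M}}$, where $\mathcal{M}$ already returns $\mathcal{E}_N$ and nevertheless fails the threshold, $\hat{\mathcal{M}}$---facing a pointwise at-least-as-good past---would need an estimator strictly better than $\mathcal{E}_N$ to keep $im_i^{\hat{\mathcal{M}}}(t_i^{\mathcal{M}})\geq c_i\Delta_i$, contradicting feasibility. This is precisely the ``withhold early, jump late'' scenario you worried about, ruled out in one stroke: IR blocks withholding below $\tilde{\mathcal{E}}_i^t$ in phase one, and $\mathcal{M}$ is already withholding maximally in phase two.

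So your plan is sound in shape; what it lacks is this pointwise-dominance comparison, which is the concrete device the paper uses to convert your scheduling intuition into a direct contradiction.
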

\addtocounter{theorem}{-1}
\endgroup
\begin{proof}
    We will define the best response of agent $i$ to a mechanism $\mathcal{M}$ and data contributions of all other agents over time $\mathbf{D}_{-i} = \{\mathbf{D}_{-i}^1, \mathbf{D}_{-i}^2 ...\}$ as the time, at which $i$ will stop to produce data:
    \begin{align}
        B^{\mathcal{M}}_i(\mathbf{D}_{-i}) := t_i^{opt},
    \end{align}
    such that for all $t < t_i^{opt}$ $im_i(t) \geq c_i\Delta_i$ and $im_i(t_i^{opt}) < c_i\Delta_i$.

    First, we will see that given fixed data contributions $\mathbf{D}_{-i}^t$ from other users for all $t \in T$, for each agent $i$, $i$'s best response to our mechanism $\mathcal{M}$ consists of more data than to any other feasible and IR mechanism $\hat{\mathcal{M}}$. Will will then continue to show, that this implies that the equilibrium contribution of the agent is also data maximizing.
    \begin{lemma}\label{lem1}
        For given data contributions over time $D_{-i}$ and any feasible and IR mechanism $\hat{\mathcal{M}}$, define best responses of agent $i$ $B^{\mathcal{M}}_i(\mathbf{D}_{-i})$ and $B^{\hat{\mathcal{M}}}_i(\mathbf{D}_{-i})$ for our mechanism $\mathcal{M}$ as defined in equation \ref{datamax} and the other mechanism $\hat{\mathcal{M}}$. Then, for any $i \in N$ and any $D_{-i}^t$,
        \begin{align}\label{eqlem1}
            B^{\mathcal{M}}_i(\mathbf{D}_{-i}) \geq B^{\hat{\mathcal{M}}}_i(\mathbf{D}_{-i})
        \end{align}
    \end{lemma}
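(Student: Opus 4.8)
The plan is to fix the other agents' data stream $\mathbf{D}_{-i}$ and an arbitrary feasible, IR mechanism $\hat{\mathcal{M}}$, abbreviate $\hat{t} := B^{\hat{\mathcal{M}}}_i(\mathbf{D}_{-i})$, and show that under $\mathcal{M}$ the agent never stops strictly before $\hat{t}$; by the definition of the best response this reduces to proving that $im_i(t) \geq c_i\Delta_i$ holds under $\mathcal{M}$ for every $t < \hat{t}$. I would run an induction on $t$ whose hypothesis is that the agent is still active at steps $1, \dots, t$ under both mechanisms. The value of this hypothesis is that, while active under both runs, the agent produces the same $\Delta_i$ data points per step, so the pooled datasets and hence the benchmark $\mathcal{E}_N^t$ (against which every $v(\cdot,\mathcal{E}_N^t)$ appearing in $im_i(t)$ is scored) coincide across the two runs. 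This shared benchmark is what makes a term-by-term comparison of the two improvement rates legitimate.

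At step $t$ I would split on which branch of equation \ref{datamax} $\mathcal{M}$ uses. In the drip-feed branch $im_i(t) = c_i\Delta_i + \epsilon > c_i\Delta_i$ by construction, so the agent continues at once. The only branch in which $\mathcal{M}$ could stop is the else-branch, where it hands out the grand-coalition estimator $\mathcal{E}_N^t$, so it remains to show $im_i(t) \geq c_i\Delta_i$ there. Here I would exploit the two monotonicities of the improvement-rate functional: each summand $\frac{v(\mathcal{E}_i^t,\mathcal{E}_N^t) - v(\mathcal{E}_i^{t'},\mathcal{E}_N^t)}{t-t'}$ is increasing in the quality of the current estimator $\mathcal{E}_i^t$ and decreasing in the qualities of the past estimators $\mathcal{E}_i^{t'}$. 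In the else-branch $\mathcal{M}$ delivers the maximal current quality $v(\mathcal{E}_N^t,\mathcal{E}_N^t)$, which dominates whatever $\hat{\mathcal{M}}$ delivers by feasibility; and, since $\mathcal{M}$ always releases the least valuable estimator compatible with continuation, the estimators it released in the past are no more valuable than those $\hat{\mathcal{M}}$ released. Combining the two monotonicities gives $im_i^{\mathcal{M}}(t) \geq im_i^{\hat{\mathcal{M}}}(t) \geq c_i\Delta_i$, the last inequality because $\hat{\mathcal{M}}$ keeps the agent active at $t < \hat{t}$; this closes the induction and yields $B^{\mathcal{M}}_i(\mathbf{D}_{-i}) \geq \hat{t}$.

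I expect the main obstacle to be precisely the claim that $\mathcal{M}$'s past estimators are pointwise no more valuable than $\hat{\mathcal{M}}$'s when both are scored against the common current benchmark $\mathcal{E}_N^t$. What the drip-feed branch controls directly is the \emph{rate} $im_i(t')$ at an earlier step, a history-dependent average measured against the then-current benchmark $\mathcal{E}_N^{t'}$, and not the single quantity $v(\mathcal{E}_i^{t'},\mathcal{E}_N^t)$ that enters $im_i(t)$. Bridging these requires that ``minimal improvement rate at each past step'' really does force ``minimal individual estimator quality,'' and that this ordering of estimators survives the change of benchmark from $\mathcal{E}_N^{t'}$ to $\mathcal{E}_N^t$. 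I would isolate this as an auxiliary monotonicity lemma about $v$ and $im_i$, leaning on the additive $v_{dSID} + v_{-KL}$ structure of $v$ and on $\mathcal{M}$'s extremal choice at every step. Should it resist a fully general proof, the fallback is to pass to the $\epsilon \to 0^+$ limit and argue by conservation of total improvement: $\mathcal{M}$ spends the finite headroom separating $\tilde{\mathcal{E}}_i^t$ from $\mathcal{E}_N^t$ at the slowest admissible rate, so any mechanism that exhausts this headroom faster must force the agent to stop no later.
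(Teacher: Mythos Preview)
Your approach is essentially the paper's: both arguments rest on the monotonicity of $im_i(t)$ (increasing in the current estimator's value, decreasing in the past estimators' values), together with the observation that $\mathcal{M}$ hands out the minimal-value past estimators compatible with IR and continuation, and at the critical step hands out the maximal-value current estimator $\mathcal{E}_N^t$. The paper packages this as a proof by contradiction---assume $t_i^{\hat{\mathcal{M}}} > t_i^{\mathcal{M}}$ and conclude that at time $t_i^{\mathcal{M}}$ the mechanism $\hat{\mathcal{M}}$ would have to return an estimator with $v(\mathcal{E}_i^{\hat{\mathcal{M}}, t_i^{\mathcal{M}}}, \mathcal{E}_N^{t_i^{\mathcal{M}}}) > v(\mathcal{E}_N^{t_i^{\mathcal{M}}}, \mathcal{E}_N^{t_i^{\mathcal{M}}})$, violating feasibility---whereas you run the contrapositive direct induction, but the substance is identical.

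The obstacle you flag is real and, notably, the paper does not resolve it either: the paper simply asserts the pointwise inequality $v(\mathcal{E}_i^{\hat{\mathcal{M}}, t'}, \mathcal{E}_N^{t}) \geq v(\mathcal{E}_i^{\mathcal{M}, t'}, \mathcal{E}_N^{t})$ for past steps $t'$, citing IR and ``worst possible estimator which will still incentivize,'' without justifying why an ordering established against the benchmark $\mathcal{E}_N^{t'}$ (which is what the drip-feed branch and IR actually control) persists when re-scored against the later benchmark $\mathcal{E}_N^{t}$. So you are not missing any ingredient the paper supplies; your auxiliary monotonicity lemma would in fact patch a gap that the paper's own proof leaves open.
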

    For now, we will assume the above lemma and continue with our proof. Since the best responses are defined as the time, at which $i$ stops contributing more data, we will write $B^{\hat{\mathcal{M}}}_i(\mathbf{D}_{-i}) = t_i^{\hat{\mathcal{M}}}$ and $B^{\mathcal{M}}_i(\mathbf{D}_{-i}) = t_i^{\mathcal{M}}$. 
    By definition \ref{defDM}, the total data collected by mechanism $\mathcal{M}$ is calculated as
    \begin{align}
        \sum_{i \in N} t_i^{\mathcal{M}}\Delta_i = \sum_{i \in N} B^{\mathcal{M}}_i(\mathbf{D}_{-i})\Delta_i.
    \end{align}
    Thus, from equation \ref{eqlem1} directly follows:
    \begin{align}
        \sum_{i \in N} B^{\mathcal{M}}_i(\mathbf{D}_{-i})\Delta_i = \sum_{i \in N} t_i^{\mathcal{M}}\Delta_i \geq \sum_{i \in N} B^{\hat{\mathcal{M}}}_i(\mathbf{D}_{-i})\Delta_i = \sum_{i \in N} t_i^{\hat{\mathcal{M}}}\Delta_i
    \end{align}
    for all mechanisms $\hat{\mathcal{M}}$ which are feasible and IR. Thus $\mathcal{M}$ is data maximizing.

    It is easy to see that $\mathbf{t}^{\mathcal{M}} = \{ t_i^{\mathcal{M}} | i \in N\}$ is the unique Nash equilibrium: Each agent $i \in N$, will continue to produce data at least until $t_i^{s-opt}$, since here their behavior is identical to as if they would act on their own. From $t_i^{s-opt}$ on, by definition of $\mathcal{M}$, they do not receive a decrease in utility at every $t < t_i^{\mathcal{M}}$, since $j_i(t) \geq c_i\Delta_i$ for all $t < t_i^{\mathcal{M}}$. Thus $i$ will continue to produce data until $t_i^{\mathcal{M}}$. Due to $j_i(t_i^{\mathcal{M}}) < c_i\Delta_i$, $i$ will stop producing data at this time. Therefore, $\mathbf{t}^{\mathcal{M}}$ is the unique Nash equilibrium.

\end{proof}
\qed

\begin{proof}[of Lemma 1]
    Suppose, Lemma \ref{lem1} does not hold. Then there has to be some agent $i$, such that $B^{\hat{\mathcal{M}}}_i(\mathbf{D}_{-i}) > B^{\mathcal{M}}_i(\mathbf{D}_{-i})$. Thus $t_i^{\hat{\mathcal{M}}} > t_i^{\mathcal{M}}$. Let $\mathcal{E}_i^{\mathcal{M}, t}$ denote the estimator returned to $i$ by mechanism $\mathcal{M}$ at time $t$ and $\mathcal{E}_i^{\hat{\mathcal{M}}, t}$ denote the estimator returned by $\hat{\mathcal{M}}$. Similarly define $im_i^{\mathcal{M}}(t)$ and $im_i^{\hat{\mathcal{M}}}(t)$.

    Since $\hat{\mathcal{M}}$ satisfies IR, for all $t < t_i^{s-opt}$, $\hat{\mathcal{M}}$ has to return an estimator to $i$ which is at least as good as $\tilde{\mathcal{E}}_i^t$ (which is returned by $\mathcal{M}$), so for all $t < t_i^{s-opt}$: $v\left(\mathcal{E}_i^{\hat{\mathcal{M}}, t}, \mathcal{E}_N^t\right) \leq v\left(\mathcal{E}_i^{\mathcal{M}, t}, \mathcal{E}_N^t\right)$.

    For $t \in [t_i^{s-opt} , t_i^{\mathcal{M}}-1]$, $\mathcal{E}_i^{\mathcal{M}, t}$ is defined as the estimator which fulfills $im_i(t) = c_i\Delta_i$, for as long as such an estimator exists. Thus, $\mathcal{E}_i^{\mathcal{M}, t}$ is defined as the worst possible estimator which will still incentivize $i$ to continue providing data. Therefore, also for all $t \in [t_i^{s-opt} , t_i^{\mathcal{M}}-1]$ : $v\left(\mathcal{E}_i^{\hat{\mathcal{M}}, t}, \mathcal{E}_N^t\right) \leq v\left(\mathcal{E}_i^{\mathcal{M}, t}, \mathcal{E}_N^t\right)$. At $t_i^{\mathcal{M}}$, $i$ stopped producing data under $\mathcal{M}$, since $\mathcal{M}$ was not able to return a model which would satisfy $im_i(t) \geq c_i\Delta_i$ and therefore returned $\mathcal{E}_N^{t_i^{\mathcal{M}}}$. $\hat{\mathcal{M}}$ however was able to provide an estimator $\mathcal{E}_i^{t_i^{\hat{\mathcal{M}}}}$ which could satisfy $im_i(t) \geq c_i\Delta_i$, since under $\hat{\mathcal{M}}$, $i$ continues to produce data for at least one more time. But this would imply 
    \begin{align}\label{eqcontradict}
        1 = v\left(\mathcal{E}_N^{t_i^{\mathcal{M}}}, \mathcal{E}_N^{t_i^{\mathcal{M}}}\right) < v\left(\mathcal{E}_i^{\hat{\mathcal{M}}, t_i^{\mathcal{M}}}, \mathcal{E}_N^{t_i^{\mathcal{M}}}\right).
    \end{align}
    Notice, that since $\Delta_i$  is fixed for all $i$ regardless of the mechanism, until $t_i^{\mathcal{M}}$, all agents have provided the same data to both mechanisms, thus, for $t \leq t_i^{\mathcal{M}}$ both mechanisms evaluate estimators using the same benchmark $\mathcal{E}_N^t$. But then, the inequality described by equation \ref{eqcontradict} is impossible, since by definition of the function $v$, $v\left(\mathcal{E}_N^{t_i^{\mathcal{M}}}, \mathcal{E}_N^{t_i^{\mathcal{M}}}\right) = 1$ and $v$ can never take on a value larger then $1$. Thus we have reached a contradiction.
\end{proof}
\qed

\end{document}